\tikzstyle{observed}=[circle, thick, minimum size=0.9cm, draw=black!100, fill=black!20]
\tikzstyle{latent}=[circle, thick, minimum size=0.9cm, draw=black!80]
\tikzstyle{plate}=[rectangle, thick, inner sep=0.25cm, draw=black!100]
\tikzstyle{shadeplate}=[rectangle, thick, inner sep=0.4cm, draw=black!100]
\tikzstyle{table}=[circle,fill=blue!20,draw=black!100,inner sep=1pt, minimum size=30pt]
\tikzstyle{client}=[rectangle,fill=blue!20,draw=black!100,inner sep=1pt, minimum size=12pt]
\newcommand{\method}{ACCAMS\xspace}
\newcommand{\bmethod}{bACCAMS\xspace}
\newcommand{\myparagraph}[1]{\vspace{1mm}\noindent{\bfseries{#1}.}}
\newenvironment{proof}{\par\noindent{\bf Proof\ }}{\hfill\BlackBox\\[2mm]}
\begin{document}

\title{ACCAMS \\ Additive Co-Clustering to Approximate Matrices Succinctly} 

\author{
  Alex Beutel\\
  Computer Science, CMU, Pittsburgh, PA, {\tt abeutel@cs.cmu.edu}
  \and
  Amr Ahmed\\
  Google~Strategic Technologies, Mountain View, CA, {\tt amra@google.com}
  \and 
  Alexander J. Smola \\
  Machine Learning, CMU, Pittsburgh, PA, {\tt alex@smola.org} \\
  Google~Strategic Technologies, Mountain View CA, USA
}

\maketitle

\begin{abstract}
  Matrix completion and approximation are popular tools to capture a
  user's preferences for recommendation and to approximate missing
  data.  Instead of using low-rank factorization we take a drastically
  different approach, based on the
  simple insight that an additive model of co-clusterings allows
  one to approximate matrices efficiently.  This allows us to build a
  concise model that, per bit of model 
  learned, significantly beats all factorization approaches to
  matrix approximation.  Even more surprisingly, we find that 
  summing over small co-clusterings is more effective in
  modeling matrices than classic co-clustering, which uses just one
  large partitioning of the matrix. 

  Following Occam's razor principle suggests that the simple structure
  induced by our model better captures the latent preferences and
  decision making processes present in the real world than classic
  co-clustering or matrix factorization. We provide an iterative
  minimization algorithm, a collapsed Gibbs sampler, theoretical
  guarantees for matrix approximation, and excellent empirical
  evidence for the efficacy of our approach.  
  We achieve state-of-the-art results on the Netflix
  problem with a fraction of the model complexity.
\end{abstract}

\section{Introduction}
\label{sec:intro}

\begin{figure}[t]
\centering
\includegraphics[width=0.6\columnwidth]{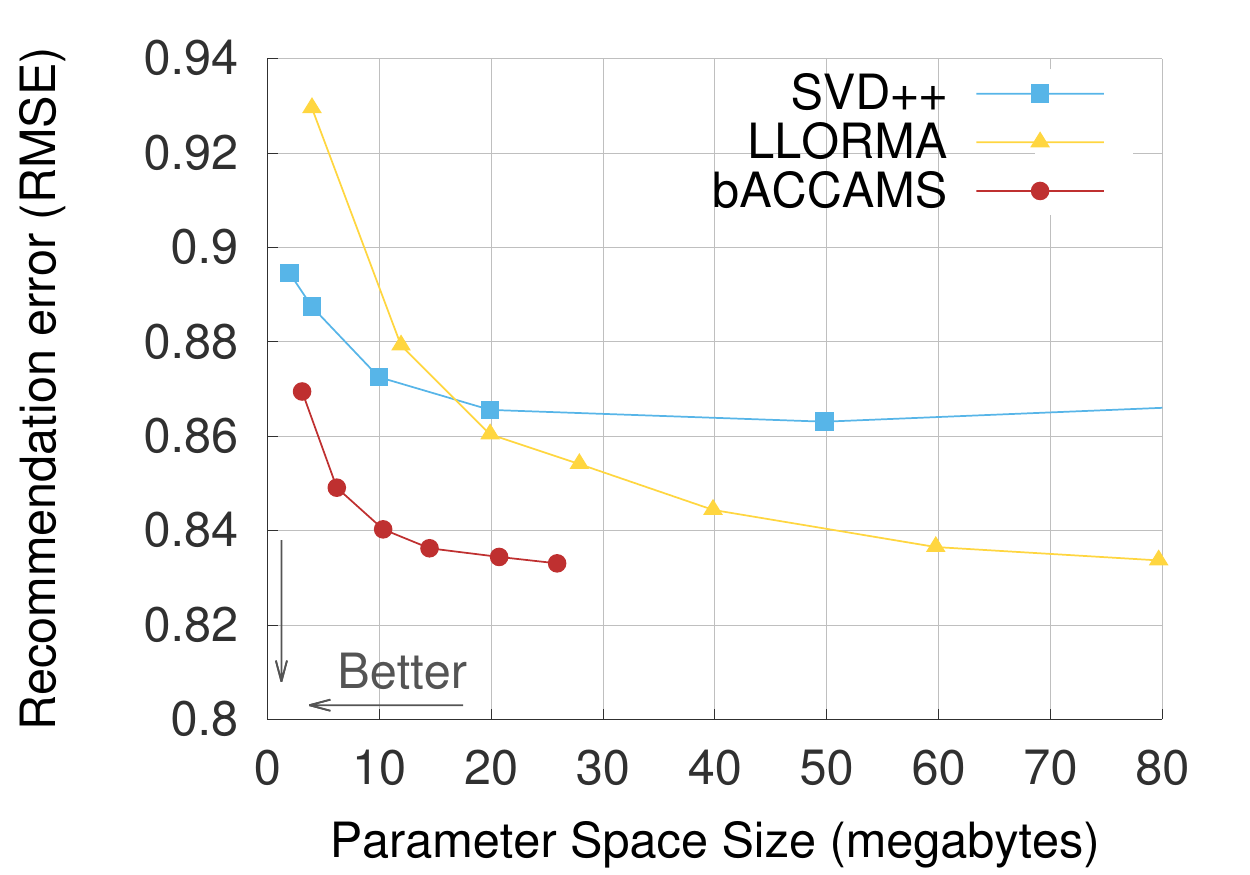} 
\caption{Accuracy of \method on Netflix, compared to
\cite{KorBelVol09} and \cite{lee2013local}. Note that our model achieves state
of the art accuracy at a fraction of the model size.
\label{fig:netflix_test_crownjewel}}
\end{figure}

Given users' ratings of movies or products, how can we model a user's
preferences for different types of items and recommend other items that the
user will like?
This problem, often referred to as the Netflix problem, has generated a flurry
of research in collaborative filtering, with a variety of proposed matrix
factorization models and inference methods.
Top recommendation systems have used thousands of factors per item and per
user, as was the case in the winning submissions in the Netflix prize \cite{KorBelVol09}.
Recent state-of-the-art methods have relied on learning even larger, more
complex factorization models, often taking nontrivial combinations of multiple
submodels \cite{mackey2011divide,lee2013local}.
Such complex models use large amounts of memory, are increasingly difficult to
interpret, and are often difficult integrate into larger systems. 

\subsection{Linear combinations of attributes}

Our approach is drastically different from previous collaborative
filtering research.  Rather than start with the assumptions of a
matrix factorization model, we make {\it co-clustering} effective for
high quality matrix completion and approximation.  Co-clustering
has been well studied \cite{banerjee2004generalized,shan2008bayesian}
but was not previously competitive in large behavior modeling and
matrix completion problems.  To achieve state of the art results, we
use an \emph{additive model of simple co-clusterings} that we call
stencils, rather than building a large single co-clustering.  The
result is a model that is conceptually simple, has a small parameter
space, has interpretable structure, and achieves the best published 
accuracy for matrix completion on Netflix, as seen in 
Figure \ref{fig:netflix_test_crownjewel}.

Using a linear combination of co-clusterings corresponds to a rather
different interpretation of user preferences and movie
properties. Matrix factorization assumes that a movie preference is
based on a weighted sum of preferences for different genres, with the
movie properties being represented in vectorial form. For instance, if
a user likes comedies but not romantic movies, then a romantic comedy
may have a predicted neutral 3-star rating.

Co-clustering on the other hand assumes there exists some ``correct''
partitioning of movies (and users). For instance, a user might be part
of a group that likes all comedies but does not like romantic
movies. Correspondingly, all romantic comedies might be aggregated
into a cluster, possibly partitioned further into PG-13 rated, or
R-rated romantic comedies. This quickly leads to a combinatorial
explosion.

By taking a linear combination of co-clusterings we benefit from both
perspectives: there is no single correct partitioning of movies and
users; however, we can use the membership in several independent
groups to encode the \emph{factorial} nature of attributes without incurring
the cost of a necessarily high-dimensional model of matrix
factorization. For instance, a movie may be \{funny, sad, thoughtful\}, it might have a
certain age rating, it might be an \{action, romantic, thriller,
documentary, family\} movie, it might be shot in a certain visual
style, and by a certain group of actors. By taking linear combinations
of co-clusterings we can take these attributes into account.

\subsection{Stencils}

The mathematical challenge that motivated this work is that, in order
to encode a rank-$k$ matrix by a factorization, we need $k$ numbers
per row (and column) respectively. With linear combinations of
stencils, on the other hand, we only need $\log_2 k$ bits per row (and
column) plus $O(k^2)$ floating point numbers regardless of the size of
the matrix.

We denote by a stencil a small $k \times k$ template of a matrix and
its mapping to the row and column vectors respectively. This is best
understood by the example below: assume that we have two simple
stencils containing $3 \times 3$ and $3
\times 2$ co-clusterings. Their linear combination yields a rather
nontrivial $9 \times 9$ matrix of rank $5$.
\vspace{2mm}

\noindent
\begin{center}
\includegraphics[width=0.9\columnwidth]{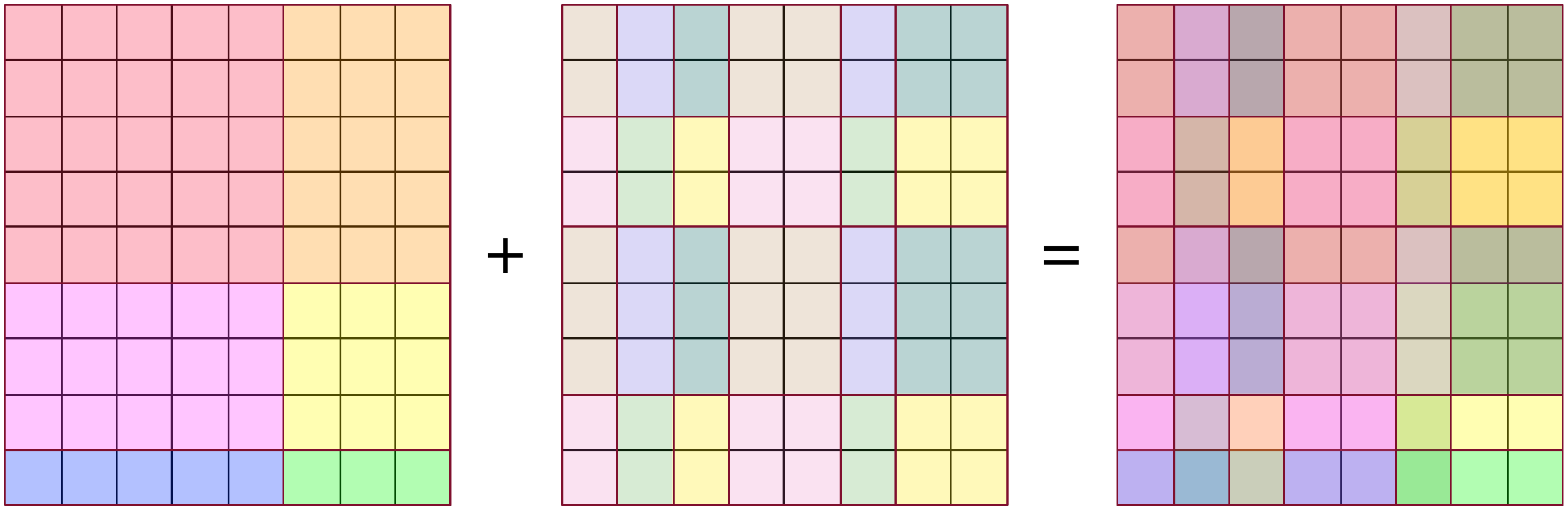}
\end{center}

\vspace{1mm}\noindent%
In contrast, classic co-clustering would require
a $(3 \cdot 3) \times (3\cdot 2)$ partitioning to match this structure.
When we have $s$ stencils of size $k \times k$, this requires a $k^s \times k^s$
partitioning. 

By design our model has a parameter space that is an order of
magnitude smaller than competing methods, requiring only $s\log_2k$
bits per user and per movie and $sk^2$ floating point numbers, where
$k$ is generally quite small.  This is computationally advantageous of
course, but also demonstrates that our modeling assumptions better
match real world structure of human decision making. 

Finding succinct models for binary matrices, e.g.\ by minimizing the
minimum description (MDL), has been the focus of significant research
and valuable results in the data mining community
\cite{van2009identifying,koutravog}. That said, these models are quite
different. To the best of our knowledge, ours is the 
first work aimed at finding a parsimonious model for general (real-valued)
matrix completion and approximation.

\subsection{Contributions}

Our paper makes a number of contributions to the problem of finding
sparse representations of matrices. 
\begin{itemize}
\item We present \method, an iterative $k$-means style algorithm that
  minimizes the approximation error by backfitting the residuals of
  previous approximations. 
\item We provide linear approximation rates exploiting the geometry of
  rows and columns of rating matrix via bounds on the metric entropy
  of Banach spaces.
\item We present a generative Bayesian non-parametric model and devise
  a collapsed Gibbs sampling algorithm, \bmethod, for efficient inference. 
\item Experiments confirm the efficacy of our approach, offering
  the best published results for matrix completion on Netflix, an interpretable
  hierarchy of content, and succinct matrix approximations for ratings, image,
  and network data.
\end{itemize}
We believe that these contributions offer a promising new direction
for behavior modeling and matrix approximation. 

{\bfseries Outline.} We begin by discussing related work from recommendation
systems, non-parametric Bayesian models, co-clustering, and minimum
description length. 
We subsequently introduce the simple $k$-means style co-clustering
and its approximation properties in Section \ref{sec:matapp}.
Subsequently, in Section \ref{sec:single} we define our Bayesian co-clustering
model and collapsed Gibbs sampler for a single stencil.  In
Section~\ref{sec:many} we extend our Bayesian model to multiple stencils. 
Section~\ref{sec:experiments}
reports our experimental results and we conclude with a discussion of future
directions for the work. 

\section{Related Work}

\noindent{\bfseries Recommender Systems.}
Probably the closest to our work is the variety of research on
behavior modeling and recommendation.  Matrix factorization
approaches, such as Koren's SVD++ \cite{Koren08}, have enjoyed great
success in recommender systems.  Recent models such as DFC
\cite{mackey2011divide} and LLORMA \cite{lee2013local} have focused on
using ensembles of factorizations to exploit local structure.

More closely related to our model are Bayesian non-parametric
approaches. For instance, \cite{GriGha11,GorJakRas06} use the Indian
Buffet Process (IBP) for recommendation. In doing so they assume that each
user (and movie) has certain preferential 
binary attributes. It can be seen as an extreme case of \method where
the cluster size $k = 2$, while using a somewhat different strategy to
handle cluster assignment and overall similarity within a
cluster. 
Following a similar intuition as \method but different perspective and focus,
\cite{palla2012infinite} extended the IBP to handle $k > 2$ for link prediction
tasks on binary graphs.
Our work differs in its focus on general, real-valued matrices, its application
of co-clustering, and its significantly simpler parameterization.

A co-clustering approach to recommendation was proposed by
\cite{BeuMurFalSmo14}. This model uses co-clustering to allow
for sharing of strength within each group. However, it does not
overcome the rank-$k$ problem, i.e.\ while clustering reduces
intra-cluster variance and improves generalization, it does not
increase the rank beyond what a simple factorization model is capable
of doing. 
Finally, \cite{PorBarWel08} proposed a factorization model based on a Dirichlet
process over users and columns. All these models are closely related to the
mixed-membership stochastic blockmodels of \cite{AirBleFieXin08}.

\myparagraph{Co-clustering}
It was was originally used primarily for understanding the clustering
of rows and columns of a matrix rather than for matrix approximation
or completion \cite{hartigan1972direct}.  This formulation was well
suited for biological tasks, but it computationally evolved to cover a
wider variety of objectives \cite{banerjee2004generalized}.  
\cite{papalexakis2013k} defined a soft co-clustering objective akin to a
factorization model.
Recent work has defined a Bayesian model for co-clustering focused on matrix
modeling \cite{shan2008bayesian}.  \cite{wang2011nonparametric} focuses on
exploiting co-clustering ensembles, but do so by finding a single consensus
co-clustering.  
As far as we know, ours is the first work to use an additive
combination of co-clusterings.

\myparagraph{Matrix Approximation}
There exists a large body of work on matrix approximation in the
theoretical computer science community. They focus mainly on efficient
low-rank approximations, e.g.\ by projection or by
interpolation. Examples of the projection based strategy are
\cite{HalMarTro09,GitMah13}. Essentially one aims to find a general
low-rank approximation of the matrix, as is common in most recommender
models.

A more parsimonious strategy is to seek \emph{interpolative}
decompositions. There one aims to approximate columns of a matrix by a
linear combination of a subset of other columns
\cite{LiMilPen13}. Nonetheless this requires us to store at least one,
possibly more scaling coefficients per column. Also note the
focus on column interpolations --- this can easily be extended to row
and column interpolations, simply by first performing a row
interpolation and then interpolating the columns. 
To the best of our knowledge, the problem of approximating matrices
with piecewise constant block matrices as we propose here is not the
focus of research in TCS. 

\myparagraph{Succinct modeling}  
The data mining community has focused on finding succinct models of
data, often directly optimizing the model size described by the
minimum description language (MDL) principle
\cite{rissanen1978modeling}.  Finding effective ways to compress real
world data allows for better modeling and understanding of the datasets.  This
approach has led to valuable results in pattern and item-set mining
\cite{vreeken2011krimp,van2009identifying} as well as graph
summarization \cite{koutravog}.  However, these approaches typically
focus on modeling databases of discrete items rather than real-valued
datasets with missing values.

\section{Matrix Approximation}
\label{sec:matapp}

Before delving into the details of Bayesian Non-parametrics we begin
with an optimization view of \method.

\subsection{Model}

Key to our model is the notion of a stencil, an extremely
easy to represent block-wise constant rank-$k$ matrix. 
\begin{definition}[Stencil]
  A stencil $S(T,c,d)$ is a matrix $S \in \RR^{m \times n}$ with the
  property that $S_{ij} = T_{c_i, d_j}$ for a template 
  \smash{$T \in \RR^{k_m \times k_n}$} and discrete index vectors \smash{$c \in \cbr{1,
    \ldots, k_m}^m$} and $d \in \cbr{1, \ldots, k_n}^n$ respectively.
\end{definition}
Given a matrix $M \in \RR^{m \times n}$ it is now our goal to find a
stencil $S(T, c, d)$ such that the approximation error $M - S(T,c,d)$
is small while simultaneously the cost for storing $T, c, d$ is
small. In the context of the example above, the $9 \times 9$ matrix is
given by the sum of two stencils, one of size $3 \times 3$ and one of
size $3 \times 2$. This already indicates that we may require more
than one stencil for efficient approximation. 
In general, our model will be such as to solve
\begin{align}
  \label{eq:lincombmatrix}
  \mini_{\cbr{T^l, c^l, d^l}}\nbr{M - \sum_{l=1}^s S(T^l, c^l, d^l)}^2_{\mathrm{Frob}}
\end{align}
That is, we would like to find an additive model of $s$ stencils that minimizes
the approximation error to $M$.
%
Such an expansion affords efficient compression using a trivial
codebook, as can be seen below.
\begin{lemma}[Compression]
  The stencil $S(T,c,d)$ can be \\ stored at $\epsilon$ element-wise
  accuracy at no more cost than 
  $$O(m \log_2 k_m + n\log_2 k_n + k_m k_n \log_2 \epsilon^{-1} \nbr{T}_\infty).$$
\end{lemma}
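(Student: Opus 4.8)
The plan is to count the bits needed for each of the three components of a stencil $S(T,c,d)$ separately, then sum. The key observation is that the stencil is completely determined by the triple $(T, c, d)$, and $\epsilon$-accuracy element-wise on $S$ reduces to $\epsilon$-accuracy element-wise on $T$, since every entry $S_{ij}$ equals some entry $T_{c_i,d_j}$ verbatim. So nothing needs to be stored per matrix entry; only the template and the two index vectors.

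First I would encode the row assignment vector $c \in \{1,\dots,k_m\}^m$. Each of the $m$ entries is one of $k_m$ symbols, so a fixed-length code uses $\lceil \log_2 k_m\rceil$ bits per entry, giving $O(m \log_2 k_m)$ bits total. Symmetrically, the column assignment $d \in \{1,\dots,k_n\}^n$ costs $O(n \log_2 k_n)$ bits. Second, I would encode the template $T \in \RR^{k_m \times k_n}$: it has $k_m k_n$ real entries, each bounded in absolute value by $\|T\|_\infty$, and to store each to additive accuracy $\epsilon$ we quantize the interval $[-\|T\|_\infty, \|T\|_\infty]$ into $O(\|T\|_\infty / \epsilon)$ buckets, costing $O(\log_2(\|T\|_\infty \epsilon^{-1}))$ bits per entry, hence $O(k_m k_n \log_2(\epsilon^{-1}\|T\|_\infty))$ bits for the whole template. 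Adding the three contributions yields the claimed bound.

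Finally I would verify the accuracy guarantee: if $\hat T$ is the quantized template with $|\hat T_{ab} - T_{ab}| \le \epsilon$ for all $a,b$, then the reconstructed stencil $\hat S$ with $\hat S_{ij} = \hat T_{c_i,d_j}$ satisfies $|\hat S_{ij} - S_{ij}| = |\hat T_{c_i,d_j} - T_{c_i,d_j}| \le \epsilon$ for every $i,j$, which is exactly element-wise $\epsilon$-accuracy. The one point requiring a little care is the exact constant/normalization inside the logarithm for the template quantization — whether one writes $\log_2 \epsilon^{-1}\|T\|_\infty$ or $\log_2(2\|T\|_\infty/\epsilon)$, etc. — but since the statement is up to $O(\cdot)$ this is immaterial, so I do not expect any genuine obstacle; the lemma is essentially a bookkeeping argument once one notices that no per-entry storage for $S$ itself is needed.
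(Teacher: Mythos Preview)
Your proposal is correct and follows essentially the same approach as the paper: count bits for $c$, $d$, and $T$ separately via uniform codes and scalar quantization, then sum. If anything, your version is slightly more explicit than the paper's, since you spell out the verification that $\epsilon$-accuracy on $T$ propagates to $\epsilon$-accuracy on $S$, which the paper leaves implicit.
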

\begin{proof} 
This follows directly from the construction. Storing the vector $c$
costs at most $m \log_2 k_m$ bits if we assume a uniform code (this also
holds for $d$). When storing $T$ approximately, we must not quantize at a level
of the approximation error or higher. Hence, a simplistic means of
encoding the dynamic range requires $\log_2 \nbr{T}_\infty/\epsilon$
bit, which is used on a per-element basis in $T$.
\end{proof}
Note that considerably better codes \emph{may} exist whenever the
entropy of $c$ is less than $\log_2 k_m$ (likewise for
$d$). Nonetheless, even the crude $\log_2 k_m$ bound is already much
better than what can be accomplished by a low-rank factorization.

Obviously, given $M$, it is our goal to \emph{find} such stencils
$S(T, c, d)$ with good approximation properties. Unfortunately,
finding linear combinations of co-clusterings is as hard as
co-clustering: assume that we
are given all but one stencil of the optimal solution. In this case
our problem reduces to co-clustering as its subproblem, which is NP-hard.

\subsection{Algorithm}

We consider a simple iterative procedure in which stencils
are computed one at a time, using the residuals as input. The inner
loop consists of a simple algorithm that is reminiscent of $k$-means
clustering. It proceeds in two stages. Without loss of generality we
assume that we have more rows than columns, i.e.\ $M \in \RR^{m \times
  n}$ with $m \geq n$. 

\myparagraph{Row clustering} 
We first perform $k$-means clustering of the rows. That
is, we aim to find an approximation of $M$ that replaces all rows by a
small subset thereof. Note that this is more stringent than the
interpolative approximations of matrices which only require us to find
a set of rows which will form a (possibly sparse) basis for all other
rows. 

\begin{algorithm}[h]
  \caption{RowClustering
  \label{alg:rowcluster}}
  \begin{algorithmic}
    \REQUIRE{matrix $M$, row clusters $k_m$}
    \STATE Draw $k_m$ rows from $M$ at random without replacement and
	copy them to $\cbr{v_1, \ldots, v_{k_m}}$.
    \STATE $t \leftarrow 0 \in \RR^{k_m}$ and $w \leftarrow 0 \in \RR^{k_m
      \times n}$
    \WHILE{not converged}
    \FOR {$i=1$ {\bfseries to} $m$}
    \STATE $c_i \leftarrow \argmin_{j} \nbr{M_{i:} - v_j}_2^2$
    \COMMENT{Find center}
    \STATE $t_{c_i} \leftarrow t_{c_i} + 1$
    \COMMENT{Increment cluster count}
    \STATE $w_{c_i} \leftarrow w_{c_i} + M_{i:}$
    \COMMENT{Increment statistics}
    \ENDFOR
    \FOR{$i=1$ {\bfseries to} $a$}
    \STATE $v_i \leftarrow w_i / t_i$
    \COMMENT{New cluster center}
    \STATE $l_i \leftarrow t_i$ and $t_i \leftarrow 0$
    \COMMENT{Cluster counts}
    \STATE $w_i \leftarrow (0, \ldots 0) \in \RR^n$
    \COMMENT{Reset statistics}
    \ENDFOR
    \ENDWHILE
	\RETURN clusters $\cbr{v_1, \ldots v_{k_m}}$, IDs $c$, counts $l$
  \end{algorithmic}
\end{algorithm}

Algorithm~\ref{alg:rowcluster} is essentially $k$-means clustering on
the rows of $M$. Once we have this, we now cluster the columns of the new
matrix in an analogous manner. The only difference is that the
approximation needs to be particularly good for row clusters that
occur frequently. Consequently the approximation measure $\nbr{M_{i:}
  - v_j}_2^2$ is replaced by the Mahalanobis distance. That is, the
only substantial difference to Algorithm~\ref{alg:rowcluster} is that
now we use the assignment
\begin{align}
  d_i \leftarrow \argmin_{j} \rbr{V_{:i} - w_j}^\top D \rbr{V_{:i} - w_j}
\end{align}
where $V \in \RR^{k_m \times n}$ is the matrix obtained by stacking $V_{i:} = v_i$ and $D$
is the diagonal matrix of counts, i.e.\ $D_{ii} = l_i$. 

\myparagraph{Missing entries}
In many
cases, however, $M$ itself is incomplete. This can be addressed quite
easily by replacing the assignment $\argmin_j \nbr{M_{i:} - v_j}^2_2$
by 
\begin{align}
  c_i \leftarrow \argmin_j \sum_{l: (i,l) \in M} \rbr{M_{il} - v_{jl}}^2
\end{align}
where we used $(i,l) \in M$ as a shorthand for the existing
entries in $M$. In finding a good cluster for the row
$M_{il}$ we restrict ourselves to the coordinates in $v_j$ where
$M_{i:}$ exists.

Likewise, for the purpose of obtaining the column clusters, we now
need to keep track for each coordinate in $V$ how many elements in $M$
contributed to it. Correspondingly denote by $t_{ej} := \sum_{(e,j)
  \in M: c_e = i} 1$ the number of entries mapped into coordinate
$V_{ej}$. Then the assignment for column clusters is obtained via
\begin{align}
  d_j \leftarrow \argmin_l \sum_{i} \rbr{V_{lj} - w_{lj}}^2 t_{lj}
\end{align}
and likewise the averages are now per-coordinate according to the
counts for both $v$ and $w$. 

\myparagraph{Backfitting}
The outcome of row and column clustering is a stencil $S(T,c,d)$
consisting of the clusters obtained by first row and then column
clustering and of the assignment vectors $c$ and $d$ once the process
is complete. It may be desirable to alternate between row and column
clustering for further refinement. Since each step
can only reduce the objective function further and the state space of
$(c, d)$ is discrete, convergence to a local minimum is assured, with
the same caveat on solution quality as in $k$-means clustering.
The last step is to take the residual $M - S(T,c,d)$ and use it as the
starting point of a new approximation round. 

\begin{algorithm}[h]
  \caption{Matrix Approximation
    \label{alg:matapp}}
  \begin{algorithmic}
    \REQUIRE matrix $M$, clusters $k_m, k_n$, max stencils $s$
    \FOR{$\mathrm{iter} = 1$ {\bfseries to} $s$}
    \STATE $(V, c, l_\mathrm{row}) \leftarrow \mathrm{RowClustering}(M, k_a)$
    \STATE $(S, d, l_\mathrm{col}) \leftarrow
    \mathrm{ColumnClustering}(V, b, \mathrm{diag}(l_\mathrm{row}))$
    \FOR{{\bfseries all} $i, j \in \cbr{1, \ldots k_m} \times \cbr{1, \ldots k_n}$}
    \STATE $T_{ij} \leftarrow \mathrm{mean} \cbr{M_{ef} | c_e = i
      \text{ and } d_f = j}$
    \ENDFOR
    \STATE $M \leftarrow M - S(T,c,d)$ and back up $(T,c,d)$
    \ENDFOR
  \end{algorithmic}
\end{algorithm}

Essentially the last stage is used to ensure that the stencil has
minimum approximation error given the partitioning. This procedure is
repeatedly invoked on the residuals to minimize the loss.  The result is an
additive model of co-clusterings.

\subsection{Approximation Guarantees}
\label{sec:entropy}

A key question is how well any given matrix $M$ can be approximated by
an appropriate stencil. For the sake of simplicity we limit ourselves
to the case where all entries of the matrix are observed. We use
covering numbers and spectral properties of $M$ to obtain approximation
guarantees. 
\begin{definition}[Covering Number]
  Denote by $\Bcal$ a Banach Space. Then for any given set $B \in
  \Bcal$ the covering number $\Ncal_\epsilon(B)$ is given by the set
  of points $\cbr{b_1, \ldots b_{\Ncal_\epsilon(B)}}$ such that for
  any $b \in B$ there exists some $b_j$ with $\nbr{b - b_j} \leq \epsilon$.
\end{definition}
Of particular interest for us are covering numbers $\Ncal_\epsilon$ of
unit balls and their functional inverses $\epsilon_n$. The latter are
referred to as entropy number and they quantify the approximation
error incurred by using a cover of $n$ elements \cite[Chapter 8]{Smola98}.
A key tool for computing entropy numbers of scaling
operators is the theorem of Gordon, K\"onig and Sch\"utt
\cite{GorKonSch87}, relating entropy numbers to singular values.
\begin{theorem}[Entropy Numbers and Singular Values]
  \label{th:svd}
  Denote by $D$ a diagonal scaling operator with $D: \ell_p \to
  \ell_p$ with scaling coefficients $\sigma_i \geq \sigma_{i+1} \geq
  0$ for all $i$. Then for all $n \in \NN$ the entropy number
  $\epsilon_n(D)$ is bounded via
  \begin{align}
    \label{eq:gks}
    \epsilon_n(D) \leq
    6 \sup_{j \in \NN} \rbr{n^{-1} \prod_{i=1}^j \sigma_i}^{\frac{1}{j}}
    \leq 6 \epsilon_n(D) 
  \end{align}
\end{theorem}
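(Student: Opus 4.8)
The plan is to establish the two inequalities of \eqref{eq:gks} separately; this is the theorem of Gordon, K\"onig and Sch\"utt \cite{GorKonSch87}, so the argument I would reproduce is the classical one. Throughout write $s := \sup_{j\in\NN}\rbr{n^{-1}\prod_{i=1}^j\sigma_i}^{1/j}$, so that the hypothesis is equivalently $\prod_{i=1}^j\sigma_i \le n\, s^j$ for every $j$, and set $E := D(B_{\ell_p})$.

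The right-hand inequality, i.e.\ the lower bound $\epsilon_n(D) \ge s$, I would obtain by a volume comparison in finite dimensions. Fix $j$ and let $P_j$ be the coordinate projection onto the first $j$ coordinates; then $P_j$ is a norm-one contraction, $P_j(B_{\ell_p}) = B_{\ell_p^j}$, and $P_j D = D_j P_j$ with $D_j := \mathrm{diag}(\sigma_1,\dots,\sigma_j)$. Hence any cover of $E$ by $n$ balls of radius $\epsilon$ maps under $P_j$ to a cover of $D_j(B_{\ell_p^j})$ by $n$ balls of radius $\epsilon$ in $\ell_p^j$. Comparing Lebesgue volumes, $n\,\epsilon^j\,\mathrm{vol}(B_{\ell_p^j}) \ge \mathrm{vol}(D_j B_{\ell_p^j}) = \rbr{\prod_{i=1}^j\sigma_i}\mathrm{vol}(B_{\ell_p^j})$, so $\epsilon \ge \rbr{n^{-1}\prod_{i=1}^j\sigma_i}^{1/j}$; taking the infimum over admissible $\epsilon$ and the supremum over $j$ gives $\epsilon_n(D)\ge s$.

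The left-hand inequality is the substantive part, and here I would follow the standard two-stage construction. First, truncate: let $N$ be the largest index with $\sigma_N \ge s$ (if there is none, then $\|D\| < s$ and $\epsilon_n(D) \le \epsilon_1(D) \le \|D\| < s \le 6s$, and we are done). Split $D = D' + D''$, where $D'$ retains the first $N$ coordinates and $D''$ the rest; since $\|D''\| \le s$, subadditivity of entropy numbers gives $\epsilon_n(D) \le \epsilon_n(D') + \epsilon_1(D'') \le \epsilon_n(D') + s$, so it suffices to cover the finite-rank operator $D'$ by $n$ balls of radius at most $5s$. Second, on $D'$ I would decompose the coordinate set $\cbr{1,\dots,N}$ into dyadic blocks $I_\ell$ with $|I_\ell| \le 2^{\ell-1}$, note that $D'$ restricted to $I_\ell$ is $\sigma_{\min I_\ell}$ times a contraction of an $|I_\ell|$-dimensional $\ell_p$-space, cover each block ellipsoid by $n_\ell$ balls via the elementary volumetric bound $\Ncal_\epsilon(B_{\ell_p^m}) \le (1 + 2/\epsilon)^m$, and recombine the block covers through additivity over the $\ell_p$-direct sum, $\epsilon_{\prod_\ell n_\ell}\bigl(\bigoplus_\ell D'|_{I_\ell}\bigr) \le \bigl(\sum_\ell \epsilon_{n_\ell}(D'|_{I_\ell})^p\bigr)^{1/p}$. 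One then allocates the per-block budgets $n_\ell$ subject to $\prod_\ell n_\ell \le n$ so that every summand is $O(s)$; the inequality $\prod_{i=1}^{2^\ell}\sigma_i \le n\,s^{2^\ell}$, applied level by level, is precisely what certifies that a feasible allocation exists, and summing the resulting geometric series yields the radius bound $5s$ for $D'$.

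I expect the main obstacle to be the budget allocation in the second stage together with the constant tracking. The difficulty is genuine: a single global volumetric cover of the ellipsoid $D'(B_{\ell_p^N})$ by radius-$s$ balls costs on the order of $2^N n$ balls, so the dyadic blocking --- which is what lets the monotonicity $\sigma_i \ge \sigma_{i+1}$ be exploited through the level-wise products $\prod_{i\le 2^\ell}\sigma_i$ --- is essential to remove the spurious $2^N$ factor; and pinning down the explicit constant $6$ (rather than some larger universal constant) requires a careful, possibly non-dyadic, refinement of how the covering budget is distributed across blocks, which is exactly the technical core of \cite{GorKonSch87}.
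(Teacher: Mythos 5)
First, a point of orientation: the paper does not prove Theorem~\ref{th:svd} at all --- it is imported verbatim from Gordon, K\"onig and Sch\"utt \cite{GorKonSch87} and used as a black box (only Corollary~\ref{cor:ball} and the approximation theorem that follow are proved in the text). So there is no internal proof to compare yours against; what you have written is an attempted reconstruction of the classical argument, and it should be judged on its own completeness. On that score, your lower bound is fine: projecting onto the first $j$ coordinates, noting $P_jD=D_jP_j$ and that a cover of $D(B_{\ell_p})$ by $n$ balls of radius $\epsilon$ projects to a cover of $D_jB_{\ell_p^j}$, and comparing Lebesgue volumes does give $\epsilon \geq (n^{-1}\prod_{i=1}^j\sigma_i)^{1/j}$ for every $j$, hence $\sup_j(n^{-1}\prod_{i\le j}\sigma_i)^{1/j}\le\epsilon_n(D)$, which is the right-hand inequality of (\ref{eq:gks}). (One small technical caveat: your truncation index $N=\max\{i:\sigma_i\ge s\}$ in the other half need not be finite when $D$ is non-compact with $\sigma_i\downarrow s$; this is repairable by truncating at $\sigma_i\ge(1+\delta)s$ and letting $\delta\to0$, but it should be said.)

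The genuine gap is in the upper bound, which is the substantive half of the theorem. Your plan --- truncate the tail of norm at most $s$, block the head dyadically, cover each block by a volumetric estimate, recombine via the $\ell_p$-additivity of entropy numbers over direct sums --- has the right architecture, and you correctly diagnose why a single global volumetric cover fails (it costs an extra factor exponential in the head dimension relative to the budget $n$). But the step that actually proves the theorem, namely exhibiting budgets $n_\ell$ with $\prod_\ell n_\ell\le n$ such that the per-block radii sum (in $\ell_p$) to at most $5s$, is exactly the step you do not carry out; you only assert that the level-wise inequalities $\prod_{i\le 2^\ell}\sigma_i\le n\,s^{2^\ell}$ ``certify that a feasible allocation exists.'' That certification is the whole content of \cite{GorKonSch87}: one has to show how the slack in the product condition at every scale (not just at the truncation scale) can be converted into per-block covering budgets whose product stays below $n$ while the radii form a summable series, and it is not obvious that your index-dyadic blocking (as opposed to blocking by the value of $\sigma_i$) does this, let alone with the stated constant $6$, which you concede you cannot pin down. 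So as it stands the proposal is an accurate roadmap with a complete proof of the easy direction, but the inequality $\epsilon_n(D)\le 6\sup_j(n^{-1}\prod_{i\le j}\sigma_i)^{1/j}$ remains unproved; in the context of this paper the honest options are either to cite \cite{GorKonSch87} as the authors do, or to execute the allocation argument in full.
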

This means that if we have a matrix with rapidly decaying singular
values, we only need to focus on the leading largest ones in order to
approximate all elements in the space efficiently. Here the tradeoff
between dimensionality and accuracy is obtained by using the harmonic
mean.
\begin{corollary}[Entropy Numbers of Unit Balls]
  \label{cor:ball}
  The covering number of a ball $\Bcal$ of radius $r$ in $\ell_2^d$ is
  bounded by 
  \begin{align}
    r n^{-\frac{1}{d}} \leq \epsilon_n(\Bcal) \leq 6 r n^{-\frac{1}{d}}.
  \end{align}
\end{corollary}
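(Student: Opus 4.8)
The plan is to express the radius-$r$ Euclidean ball as the image of a unit ball under a diagonal scaling operator and then apply Theorem~\ref{th:svd}. Define $D:\ell_2\to\ell_2$ to be the diagonal operator with scaling coefficients $\sigma_1=\cdots=\sigma_d=r$ and $\sigma_i=0$ for $i>d$. On its first $d$ coordinates $D$ acts as multiplication by $r$ and it annihilates all higher coordinates, so the image of the unit ball of $\ell_2$ under $D$ is exactly $\Bcal$, the ball of radius $r$ in $\ell_2^d$. Consequently $\epsilon_n(\Bcal)=\epsilon_n(D)$, and it remains only to estimate $\epsilon_n(D)$ using \eqref{eq:gks}.

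Next I would compute the supremum $\sup_{j\in\NN}\bigl(n^{-1}\prod_{i=1}^j\sigma_i\bigr)^{1/j}$ appearing in \eqref{eq:gks}. For $j>d$ the product $\prod_{i=1}^j\sigma_i$ contains the zero factor $\sigma_{d+1}$ and therefore vanishes, so only the indices $j\in\{1,\ldots,d\}$ can contribute. For those indices $\prod_{i=1}^j\sigma_i=r^j$, so the bracketed expression equals $r\,n^{-1/j}$. Since $n\in\NN$ we have $n\ge 1$, hence $j\mapsto n^{-1/j}$ is nondecreasing, and the supremum over $\{1,\ldots,d\}$ is attained at $j=d$, giving $\sup_{j\in\NN}\bigl(n^{-1}\prod_{i=1}^j\sigma_i\bigr)^{1/j}=r\,n^{-1/d}$.

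Plugging this value into the two-sided estimate \eqref{eq:gks} of Theorem~\ref{th:svd} yields $r\,n^{-1/d}\le\epsilon_n(D)\le 6\,r\,n^{-1/d}$, and since $\epsilon_n(\Bcal)=\epsilon_n(D)$ this is precisely the claimed bound. I expect the only genuinely delicate step to be the first one: verifying cleanly that the entropy number of the \emph{set} $\Bcal$ coincides with the operator entropy number $\epsilon_n(D)$, i.e.\ that realizing the finite-dimensional ball inside $\ell_2$ via the rank-$d$ operator $D$ (with its trailing zero singular values) does not alter the covering behaviour. Everything after that reduction is the short monotonicity computation above.
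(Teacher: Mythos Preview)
Your proposal is correct and follows exactly the approach taken in the paper: realize $\Bcal$ as the image of the unit ball under the diagonal operator with $\sigma_1=\cdots=\sigma_d=r$ and $\sigma_i=0$ for $i>d$, then observe that the supremum in \eqref{eq:gks} is attained at $j=d$. Your write-up is in fact more careful than the paper's, which simply asserts that the maximum occurs at $j=d$ without spelling out the monotonicity of $j\mapsto n^{-1/j}$ or the vanishing of the product for $j>d$; the delicate identification $\epsilon_n(\Bcal)=\epsilon_n(D)$ that you flag is left implicit in the paper as well.
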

\begin{proof} 
This follows directly from using the linear operator $x \to r x$ for
$x \in \ell_2^d$. Here the scaling operator has eigenvalues $\sigma_i
= r$ for all $1 \leq i \leq d$ and $\sigma_i = 0$ for $i > d$. 
The maximum in \eq{eq:gks} is always $j = d$. 
\end{proof}
The following theorem states that we can approximate $M$ up to a
multiplicative constant at any step, provided that we pick a large
enough clustering. It also means that we get linear convergence, i.e.\
convergence in $O(\log \epsilon)$ steps to $O(\epsilon)$ error, since
the bound can be applied iteratively.
\begin{theorem}[Approximation Guarantees]
  Denote by $\sigma_1, \ldots \sigma_n$ the singular values of $M$. Then
  using $l$ clusters for rows and columns respectively the matrix $M$
  can be approximated with error at most 
  \begin{align*}
    \nbr{M - M'}_\infty & \leq 2 \nbr{M}^{\frac{1}{2}} \epsilon_l\rbr{\Sigma^{\frac{1}{2}}}
    \\
    \nbr{M - M'}_2 & \leq (\sqrt{m} + \sqrt{n}) \nbr{M}^{\frac{1}{2}}
                     \epsilon_l\rbr{\Sigma^{\frac{1}{2}}} 
  \end{align*}
  Here $\epsilon_l$ is given by Theorem~\ref{th:svd} and
  Corollary~\ref{cor:ball} respectively. 
\end{theorem}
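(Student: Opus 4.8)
The plan is to reduce both the row‑ and the column‑clustering stages to a single covering problem for a symmetric square‑root factorization of $M$. I would write the thin SVD $M = U\Sigma V^\top$ with $\Sigma = \mathrm{diag}(\sigma_1,\ldots,\sigma_n)$ and split it as $M = AB$ with $A := U\Sigma^{1/2}\in\RR^{m\times n}$ and $B := \Sigma^{1/2}V^\top\in\RR^{n\times n}$. Since $U$ and $V$ have orthonormal columns, every row $U_{i:}$ and every row $V_{j:}$ has Euclidean norm at most $1$ (the diagonal of an orthogonal projection is bounded by $1$); hence each row $A_{i:}=U_{i:}\Sigma^{1/2}$ of $A$ and each column $B_{:j}=\Sigma^{1/2}V_{j:}^\top$ of $B$ lies in the image $\Sigma^{1/2}(\mathcal{B})$ of the unit ball $\mathcal{B}$ of $\ell_2^n$ under the diagonal operator with coefficients $\sigma_i^{1/2}$, and in particular $\nbr{A_{i:}}_2,\nbr{B_{:j}}_2\le\sigma_1^{1/2}=\nbr{M}^{1/2}$.

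By Theorem~\ref{th:svd} the set $\Sigma^{1/2}(\mathcal{B})$ — hence also the finite families $\cbr{A_{i:}}$ and $\cbr{B_{:j}}$ — is covered by $l$ balls of radius $\epsilon_l(\Sigma^{1/2})$ (for $\mathrm{rank}\,M\le l$ one may instead invoke Corollary~\ref{cor:ball}). I would fix such covers with centers $p_1,\ldots,p_l$ for the rows and $q_1,\ldots,q_l$ for the columns, chosen inside the convex set $\Sigma^{1/2}(\mathcal{B})$ so that $\nbr{p_a}_2,\nbr{q_b}_2\le\nbr{M}^{1/2}$, let $c_i,d_j$ index the balls containing $A_{i:}$ and $B_{:j}$, and take the template $T_{ab}:=\langle p_a,q_b\rangle$; the resulting stencil $M':=S(T,c,d)$ uses exactly $l$ row and $l$ column clusters. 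A one‑line bilinear estimate then gives the element‑wise bound:
\begin{align*}
  |M_{ij}-M'_{ij}|
  &= |\langle A_{i:},B_{:j}\rangle-\langle p_{c_i},q_{d_j}\rangle|\\
  &\le \nbr{A_{i:}-p_{c_i}}_2\,\nbr{B_{:j}}_2 + \nbr{p_{c_i}}_2\,\nbr{B_{:j}-q_{d_j}}_2
   \le 2\,\nbr{M}^{1/2}\,\epsilon_l\rbr{\Sigma^{1/2}}.
\end{align*}

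For the spectral bound I would write $M-M' = E_1+E_2$ with $(E_1)_{ij}=\langle A_{i:}-p_{c_i},B_{:j}\rangle$ and $(E_2)_{ij}=\langle p_{c_i},B_{:j}-q_{d_j}\rangle$, i.e.\ $E_1=\widetilde A B$ and $E_2=P\widetilde B$, where $\widetilde A$ has rows $A_{i:}-p_{c_i}$, $\widetilde B$ has columns $B_{:j}-q_{d_j}$ and $P$ has rows $p_{c_i}$. Submultiplicativity of $\nbr{\cdot}_2$ together with $\nbr{\widetilde A}_2\le\nbr{\widetilde A}_{\mathrm{Frob}}\le\sqrt m\,\epsilon_l(\Sigma^{1/2})$, $\nbr{\widetilde B}_2\le\sqrt n\,\epsilon_l(\Sigma^{1/2})$, $\nbr{B}_2=\nbr{M}^{1/2}$ and $\nbr{P}_2=\nbr{A-\widetilde A}_2\le\nbr{M}^{1/2}+\sqrt m\,\epsilon_l(\Sigma^{1/2})$ yields $\nbr{E_1}_2+\nbr{E_2}_2\le(\sqrt m+\sqrt n)\nbr{M}^{1/2}\epsilon_l(\Sigma^{1/2})$ up to the lower‑order term $\sqrt{mn}\,\epsilon_l(\Sigma^{1/2})^2$, which is negligible in the regime where the bound is of interest (or can be tracked explicitly). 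Applying the estimate to the residual $M-M'$ in place of $M$ — whose spectral norm, and therefore $\epsilon_l$, has shrunk by a constant factor — gives the claimed geometric, $O(\log\epsilon^{-1})$‑step, convergence.

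I expect the main obstacle to be twofold: first, recognising that the \emph{symmetric} $\Sigma^{1/2}$–split is precisely what places both the row vectors and the column vectors in one and the same scaled ball $\Sigma^{1/2}(\mathcal{B})$, so that a single invocation of Theorem~\ref{th:svd} bounds both clustering stages simultaneously; and second, the bilinear error propagation in the $\nbr{\cdot}_2$ estimate, where the cross term $\sqrt{mn}\,\epsilon_l(\Sigma^{1/2})^2$ has to be argued away or absorbed to obtain the clean stated form. The remaining ingredients — the row/column $\ell_2$‑norm bounds, submultiplicativity, and $\nbr{\cdot}_2\le\nbr{\cdot}_{\mathrm{Frob}}$ — are routine.
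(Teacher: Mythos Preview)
Your proposal is essentially the paper's own argument: the same symmetric $\Sigma^{1/2}$ factorization $M=(U\Sigma^{1/2})(\Sigma^{1/2}V^\top)$, the same appeal to Theorem~\ref{th:svd} to cover the row (resp.\ column) vectors of the two factors by $l$ balls of radius $\epsilon_l(\Sigma^{1/2})$, the same two‑term bilinear splitting for both the $\ell_\infty$ and the spectral estimates. You are in fact more careful than the paper in one place: the paper silently uses $\nbr{R'}_2\le\nbr{M}^{1/2}$ in bounding $\nbr{(Q-Q')R'}$, whereas you correctly track the extra $\sqrt{mn}\,\epsilon_l(\Sigma^{1/2})^2$ cross term that this step actually produces and note that it is lower order.
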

\begin{proof}
  Using the singular value decomposition of $M$ into $M = U \Sigma V$ we can
  factorize $M = Q^\top R$ where $Q = \Sigma^{\frac{1}{2}} U$ and $R =
  \Sigma^{\frac{1}{2}} V$. By construction, the singular values of $Q$
  and $R$ are $\Sigma^{\frac{1}{2}}$. We now cluster the rows of $Q$ and $R$
  independently to obtain an approximation of $M$. 

  For $Q$ we know that its rows can be approximated by $l$ balls with
  error $\epsilon_l\rbr{\Sigma^{\frac{1}{2}}}$ as per
  Theorem~\ref{th:svd}. Also note that its row vectors are contained
  in the image of the unit ball under $Q$ --- if they were not,
  project them onto the unit ball and the approximation error cannot
  increase since the targets are within the unit ball, too. Hence the
  $\epsilon_l$-cover of the latter also provides an approximation of
  the row-vectors of $Q$ by $Q'$ with accuracy $\epsilon_l$, where
  $Q'$ contains at most $l$ distinct rows. The same holds for the
  matrix $R$, as approximated by $R'$. Hence we have
  \begin{align*}
    \abr{M_{ij} - \inner{Q_{i:}'}{R_{j:}'}}
    & = \abr{\inner{Q_{i:}}{R_{j:}} - \inner{Q_{i:}'}{R_{j:}'}} \\
    & = \abr{\inner{Q_{i:} - Q_{i:}'}{R_{j:}} + \inner{Q_{i:}'}{R_{j:} - R_{j:}'}} \\
    & \leq \nbr{Q_{i:} - Q_{i:}'}\nbr{R_{j:}} +
    \nbr{Q_{i:}'}\nbr{R_{j:} - R_{j:}'} \\
    & \leq 2 \nbr{M}^{\frac{1}{2}} \epsilon_l\rbr{\Sigma^{\frac{1}{2}}}
  \end{align*}
  This provides a \emph{pointwise} approximation guarantee.If we only
  have a bound on the rank and on $\nbr{M}$, this yields
  \vspace{-4mm}
  \begin{align*}
    \abr{M_{ij} - \inner{q_i}{r_j}} \leq
    12 \nbr{M} l^{\frac{1}{2d}}
  \end{align*}
  Moreover, since each row in $Q$ and $R$ respectively will be
  approximated with residual bounded by $\epsilon_l$ we can bound 
  $\nbr{Q - Q'} \leq \sqrt{m} \epsilon_l$ and 
  $\nbr{R - R'} \leq \sqrt{n} \epsilon_l$ respectively. This yields a
  bound on the matrix norm of the residual via 
  \begin{align*}
    \nbr{\sbr{QR - Q' R'}x} & \leq
    \nbr{Q (R-R')x} +  \nbr{(Q-Q') R'x} \\
    & \leq (\sqrt{m} + \sqrt{n}) \nbr{M}^{\frac{1}{2}}
      \epsilon_l\rbr{\Sigma^{\frac{1}{2}}}\nbr{x} 
  \end{align*}
  This bounds the matrix norm of the residual. 
\end{proof}
Note that the above is an \emph{existence} proof rather than a
constructive prescription. However, by using the fact that set cover
is a submodular problem \cite{Wolsey82}, it follows that given $l$, we are
able to obtain a near-optimal cover, thus leading to a
\emph{constructive} algorithm. Note, however, that the main purpose of
the above analysis is to obtain theoretical upper bounds on the rate of
convergence. In practice, the results can be considerably better, as
we show in Section~\ref{sec:experiments}.

\section{Generative Model}

In the same manner as many risk minimization problems (e.g.\ penalized
logistic regression) have a Bayesian counterpart (Gaussian Process
classification) \cite{Neal94}, we now devise a Bayesian
counterpart to \method, which we will refer to as 
\bmethod. We begin with the single stencil case in
Section~\ref{sec:single} and extend it to many stencils in
Section~\ref{sec:many}. 

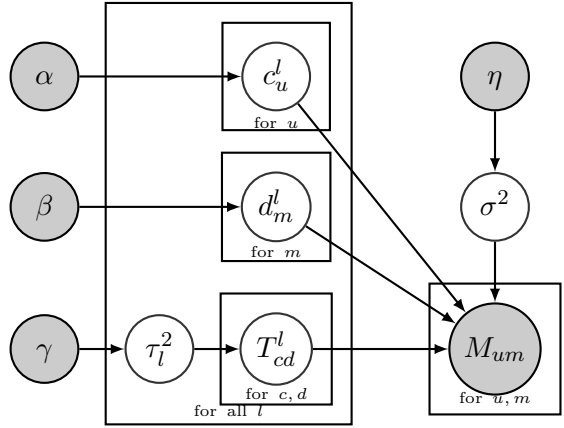
\begin{figure}[bt]
	\centering
\begin{tikzpicture}[>=latex,text height=1.5ex,text depth=0.25ex]
  \matrix[row sep=0.8cm,column sep=0.6cm] {
    \node (alpha) [observed]{$\alpha$}; & &
    \node (c) [latent]{$c_u^l$}; & & &
    \node (gamma) [observed]{$\eta$}; &
    \\
    \node (beta) [observed]{$\beta$}; & & 
    \node (d) [latent]{$d_m^l$}; & & &
    \node (sigma) [latent]{$\sigma^2$}; &
    \\
    \node (gammatau) [observed]{$\gamma$}; &
    \node (tau) [latent]{$\tau_l^2$}; &
    \node (S) [latent]{$T_{cd}^l$}; & & &
    \node (r) [observed]{$M_{um}$}; &
    \\
   };
  \path[->]
  (gammatau) edge[thick] (tau)
  (tau) edge[thick] (S)
  (alpha) edge[thick] (c)
  (beta) edge[thick] (d)
  (c) edge[thick] (r)
  (d) edge[thick] (r)
  (S) edge[thick] (r)
  (sigma) edge[thick] (r)
  (gamma) edge[thick] (sigma)
  ;
  \begin{pgfonlayer}{background}
    \node (usercluster) [plate, fit=(c)] {\
      \\[9mm]\tiny for $u$};
    \node (moviecluster) [plate, fit=(d)] {\
      \\[8.5mm]\tiny for $m$};
    \node (stencil) [plate, fit=(S)] {\
      \\[9mm]\tiny for $c,d$};
    \node (ratings) [plate, fit=(r)] {\
      \\[10mm]\tiny for $u,m$};
    \node (stencils) [plate, fit=(usercluster) (moviecluster) (tau) (stencil)] {\ 
      \\[49mm]\tiny for all $l$};
    \end{pgfonlayer}
\end{tikzpicture}
\caption{Generative model for recommdnation and matrix approximation
  (\bmethod). For each stencil, as indexed by $l$, row and cluster
  memberships $c^l$ and $d^l$ are drawn from a Chinese Restaurant
  Process. The values for the template $T^l$ are drawn from a Normal
  Distribution. The observed ratings $M_{um}$ are 
  sums over the stencils $S(T^l, c^l, d^l)$. \label{fig:multi_stencil_model}}
\end{figure}

\subsection{Co-Clustering with a Single Stencil}
\label{sec:single}

We begin with a simplistic model of co-clustering. It serves as the
basic template for single-matrix inference. All subsequent steps use
the same idea. In a nutshell, we assume that each user $u$ belongs to
a particular cluster $c_u$ drawn from a Chinese Restaurant Process
$\mathrm{CRP}(\alpha)$. Likewise, we assume that each movie $m$
belongs to some cluster $d_m$ drawn analogously from
$\mathrm{CRP}(\beta)$. The scores of the matrix $M_{um}$ are obtained
from a stencil $S(T,c,d)_{um} = T_{c_u d_m}$ with additive noise
$\epsilon_{um} \sim \Ncal(0, \sigma^2)$. The stencil values $T_{cd}$
themselves are drawn from a normal distribution $\Ncal(0, \tau^2)$. In
turn, the variances $\tau^2$ and $\sigma^2$ are obtained via a
conjugate prior, i.e.\ the Inverse Gamma distribution.

This is an extremely simple model similar to \cite{shan2008bayesian}, akin
to a decision stump. The
rationale for picking such a primitive model is that we will be combining
linear combinations thereof to obtain a very flexible tool.
The model is shown in Figure \ref{fig:multi_stencil_model}. For $l =
1$ the formal definition is as follows: 
\begin{subequations}
\begin{align}
  c_u & \sim \mathrm{CRP}(\alpha) & 
  d_m & \sim \mathrm{CRP}(\beta)  
  \\
  T_{cd} & \sim \Ncal(0, \tau^2) &
  M_{um} & \sim \Ncal\rbr{T_{c_u d_m}, \sigma^2}
  \\
  \tau^2 & \sim \mathrm{IG}(\gamma) &
  \sigma^2 & \sim \mathrm{IG}(\eta) 
\end{align}
\end{subequations}
Recall that the Inverse Gamma distribution is given by 
\begin{align}
  \label{eq:igitt}
  p(x|a, b) & = b^a \Gamma^{-1}(a) x^{-a-1} e^{-\frac{b}{x}} \\
  \text{and hence }
  p(\sigma^2|\eta_a, \eta_b) & = \eta_b^{\eta_a}
  \Gamma^{-1}(\eta_a) \sigma^{-2(\eta_a + 1)} e^{-\frac{\eta_b}{\sigma^2}}
\end{align}
Consequently the joint probability distribution over all scores, given
the variances is given by
\begin{align}
  & p\rbr{M, S(T,c, d)|\alpha, \beta, \sigma^2, \tau^2} 
  \label{eq:fullmodel} \\
  = & \mathrm{CRP}(c|\alpha) \mathrm{CRP}(d|\beta) 
      \prod_{c,d} { \frac{1}{\sqrt{2 \pi \tau^2}}}
      \exp\rbr{-\frac{T_{cd}^2}{2 \tau^2}}
      \prod_{(u,m)} { \frac{1}{\sqrt{2 \pi \sigma^2}}}
      \exp\rbr{-\frac{\rbr{M_{um} - T_{c_u d_u}}^2}{2 \sigma^2}} \nonumber
\end{align}
The idea is that each user and each movie are characterized by a
simple cluster. Since we chose all priors to be conjugate to the
likelihood terms, it is possible to collapse out the choice of
$T_{cd}$. 
This is particularly useful as it
allows us to accelerate the sampler considerably --- now we only
sample over the discrete random variables $c_u, d_m$ indicating the
cluster memberships for a particular user and movie. In other words,
we obtain a closed form expression for $p(M, c, d|\alpha, \beta,
\sigma^2, \tau^2)$. Moreover, $\sigma^2|S(T,c,d), \eta$ and $\tau^2|T,
\gamma$ are both Inverse Gamma due to conjugacy, hence we can sample
them efficiently after sampling $T$. 

\subsection{Inferring Clusters}

In the following we discuss a partially collapsed Gibbs sampler
(effectively we use a Rao-Blackwellization strategy when sampling
cluster memberships) for efficient inference. We begin with the part
of sampling $c|d, \alpha, \sigma^2, \tau^2$.
\begin{description}
\item[Chinese Restaurant Process:] It is well known that for
  exponential families the conjugate distribution allows for
  collapsing by taking ratios between normalization coefficients with
  and without the additional sufficient statistics item. See e.g.\
  \cite[Appendix A]{GriGha11} for a detailed derivation. Denote by
  $n_i,m_j$ the number of users and movies belonging to 
  clusters $i$ and $j$ respectively. Moreover, denote by $n$ and $m$ the
  total number of users and movies, and by $k_n, k_m$ the number of
  clusters. In this case we can express 
  \begin{align}
    \nonumber
	p(c|\alpha) & = \alpha^{k_n} \sbr{\prod_{i=1}^{k_n} \Gamma(n_i)}
                  \frac{\Gamma(\alpha)}{\Gamma(\alpha + n)} \\
    \text{and hence }
    \nonumber 
    p(c_i = t|c^{-i}, \alpha) & = 
    \begin{cases}
      \frac{n_t^{-i}}{\alpha + n - 1} & \text{ if $n_t^{-i} > 0$} \\
      \frac{\alpha}{\alpha + n-1} & \text{ otherwise}
    \end{cases}
  \end{align}
  An analogous expression is available for $p(d|\beta)$ and $p(d_j =
  t|d^{-j},\beta)$. Note that the superscript $\sbr{\cdot}^{-i}$ denotes
  that the $i$-th observation is left out when computing the
  statistic. Large values of $\alpha$ and $\beta$ encourage the
  formation of larger numbers of clusters. The collapsed expressions
  will be useful for Gibbs sampling.
\item[Integrating out $T$:]
For faster mixing we need to integrate out $T$ whenever we resample
$c$ and $d$. As we shall see, this is easily accomplished by keeping
simple linear statistics of the ratings. Moreover, by integrating out
$T$ we avoid the problem of having to instantiate a new value whenever
a new cluster is added. 

For a given block $(c,d)$ with associated $T_{cd}$, the distribution
of ratings is Gaussian with mean $0$ and with covariance matrix
$\sigma^2 \one + \tau^2 1 1^\top$ (due to the independence of the variances and the additive nature of the normal distribution). Here we use $\one$ to denote
the identity matrix and $1$ to denote the vector of all $1$. 
%
Denote by $n_{cd}$ the number of rating pairs $(u,m)$ for which $c_u =
c$ and $d_m = d$. Moreover, denote by $M_{cd}$ the vector of
associated ratings. Hence, the likelihood of the cluster block
$(c,d)$, as observed in $M_{cd}$ is 
\begin{align}
  p(M_{cd}|\sigma^2, \tau^2) = \frac{\exp\sbr{-\frac{1}{2} M_{cd}^\top
  \sbr{\sigma^2 \one + \tau^2 1 1^\top}^{-1} M_{cd}}}{(2 \pi)^{\frac{n_{cd}}{2}}
  \abr{\sigma^2 \one + \tau^2 1 1^\top}^{\frac{1}{2}}}  
  \nonumber
\end{align}
In computing the above expression we need to compute the determinant
of a diagonal matrix with rank-1 update, and the inverse of said
matrix. For the former, we use the matrix-determinant lemma, and for
the latter, the Sherman-Morrison-Woodbury formula:
\begin{align*}
  \hspace{-10mm}
  M_{cd}^\top \sbr{\sigma^2 \one + \tau^2 1 1^\top}^{-1} M_{cd} & = 
  \frac{1}{\sigma^{2}} \nbr{M_{cd}}^2  -
  \frac{\tau^2}{\sigma^2} \cdot \frac{\rbr{1^\top M_{cd}}^2}{\sigma^2 + n_{cd}
    \tau^2} \\
  \hspace{-10mm}
  \log \abr{\sigma^2 \one + \tau^2 1 1^\top}  & = (n_{cd} - 1)
  \log \sigma^2 + \log \sbr{\sigma^2 + n_{cd} \tau^2}
\end{align*}
This allows us to assess whether it is beneficial to assign a user $u$
or a movie $m$ to a different or a new cluster efficiently, since the
only statistics involved in the operation are sums of residuals
and of their squares. 
\end{description}
This leads to a collapsed Gibbs-sampling algorithm. At
each step we check how likelihoods change by assigning a movie (or
user) to another cluster. We denote by $n_{cd}'$ the new cluster count
and by $M_{cd}'$ the new set of residuals. Let
\begin{align}
  \Delta := {\textstyle \frac{n_{cd}' - n_{cd}}{2}} \sbr{\log (2\pi) +
  \log \sigma^2} + \frac{1}{2 \sigma^2}
  \sbr{\nbr{M_{cd}'}^2 - \nbr{M_{cd}}^2}
  \nonumber
\end{align}
be a constant offset, in log-space, that only depends on the additional ratings that
are added to a cluster. In other words, it is \emph{independent} of the cluster
that the additional scores are assigned to. Hence $\Delta$ can be safely
ignored.
\begin{align}
  \label{eq:sampleold}
  p(c_u = c|\cdot) \propto & \frac{n_c^{-i}}{\alpha + n-1}
  \prod_d \sbr{\frac{\sigma^2 + n_{cd} \tau^2}{\sigma^2 + n_{cd}'
  \tau^2}}^{\frac{1}{2}} 
  \exp\sbr{\frac{\tau^2}{2 \sigma^2} \sum_d \sbr{ 
    \frac{\rbr{1^\top M_{cd}'}^2}{\sigma^2 + n_{cd}' \tau^2} -
    \frac{\rbr{1^\top M_{cd}}^2}{\sigma^2 + n_{cd} \tau^2}}}
\end{align}
For a new cluster this can be simplified since there is no data, hence
$n_{cd} = 0$ and $M_{cd} = []$.
\begin{align}
  \label{eq:samplenew}
  p(c_u = c_\mathrm{new}|\cdot) 
  \propto & {\textstyle\frac{\alpha}{\alpha + n-1}}
  \prod_d \sbr{\textstyle\frac{\sigma^2}{\sigma^2 + n_{cd}' \tau^2}}^{\frac{1}{2}} 
  \exp\sbr{{\textstyle\frac{\tau^2}{2 \sigma^2}} \sum_d 
  \textstyle\frac{\rbr{1^\top M_{cd}'}^2}{\sigma^2 + n_{cd}' \tau^2}}
\end{align}
The above expression
is fairly
straightforward to compute: we only need to track $n_{cd}$, i.e.\ the
number of ratings assigned to a particular (user cluster, movie
cluster) combination and $1^\top M_{cd}$, i.e.\ the sum of the scores
for this combination. 

\subsection{Inferring Variances}

For the purpose of recommendation and for a subsequent combination of
several matrices, we need to infer variances and instantiate the scores
$T_{cd}$. By checking \eq{eq:fullmodel} we see that $T_{cd}|\mathrm{rest}$ is
given by a normal distribution with parameters
\begin{align}
  \label{eq:scd-normal}
  T_{cd}|\mathrm{rest} \sim \Ncal\rbr{\textstyle \frac{1^\top
      M_{cd}}{\rho n_{cd}}, \frac{\sigma^2}{\rho n}}
  \text{ where }  \rho =
  \sbr{1 + \textstyle \frac{1}{n_{cd}} \frac{\sigma^2}{\tau^2}}.
\end{align}
Note that the term $\rho$ plays the role of a classic shrinkage term
just as in a James-Stein estimator. To sample $\sigma^2$ and $\tau^2$
we use the Inverse Gamma distribution of \eq{eq:igitt}. 

Denote by $E$ the total number of observed values in $M$. 
In this case, $\sigma^2$ is drawn from an
Inverse Gamma prior with parameters $(\eta'_a, \eta'_b)$:
\begin{align}
  \label{eq:ig-eta}
  \eta'_a \leftarrow \eta_a + \frac{E}{2}
  \text{ and }
  \eta'_b \leftarrow \eta_b + \sum_{(u,m)} (M_{um} - T_{c_u d_m})^2.
\end{align}
Analogously, we draw $\tau^2$
from an Inverse Gamma with parameters 
\begin{align}
  \label{eq:ig-gamma}
  \gamma'_a \leftarrow \gamma_a + \frac{k_n k_m}{2}
  \text{ and }
  \gamma'_b \leftarrow \gamma_b + \sum_{c,d} T_{cd}^2
\end{align}
$k_n$ and $k_m$ denote the number of user and movie clusters.

\subsection{Efficient Implementation}

With these inference equations we can implement an efficient sampler, as seen in
Algorithm \ref{alg:sampler}.
The key to efficient sampling is to cache the per-cluster sums of ratings
$1^\top M_{cd}$. 
Then reassigning a user (or
movie) to a different (or new) cluster is just a matter of checking
the amount of change that this would effect.
Hence each sampling pass costs $O(k_n \cdot k_m \cdot (n+m) + E)$ operations. It
is linear in the number of ratings and of partitions. 

\begin{algorithm}[htb]
  \caption{StencilSampler \label{alg:sampler}}
\begin{algorithmic}
  \STATE {\bfseries Initialize} row-index and column-index of data in $M$
  \STATE {\bfseries Initialize} sum of squares $Q := \sum_{(u,m)} M_{um}^2$
  \STATE {\bfseries Initialize} statistics for each partition
  \begin{align*}
    n_{cd} := \cbr{(u,m): c_u = c, d_m = d} 
    \text{ and }
    l_{cd} := \hspace{-7mm}\sum_{(u,m): c_u = c, d_m = d}\hspace{-7mm} M_{um}
  \end{align*}
  \WHILE{sampler not converged}
  \FOR{{\bfseries all users} $u$}
  \STATE For all movie clusters $d$ compute the incremental changes
  \begin{align*}
    n_{ud} := \cbr{(u,m): d_m = d} \text{ and }
    l_{ud} := \hspace{-3mm}\sum_{(u,m): d_m = d}\hspace{-3mm} M_{um}
  \end{align*}
  \STATE Remove $u$ from their cluster
  $\displaystyle n_{c_u d} \leftarrow n_{c_u d} - n_{ud}$ 
  \STATE Remove $u$ from their cluster
  $\displaystyle l_{c_u d} \leftarrow l_{c_u d} - n_{ud}$
  \STATE Sample new user cluster $c_u$ using \eq{eq:sampleold} and \eq{eq:samplenew}.
  \STATE Update statistics 
  \begin{align*}
  n_{c_u d} \leftarrow n_{c_u d} + n_{ud}
  \text{ and }
  l_{c_u d} \leftarrow l_{c_u d} + n_{ud}
  \end{align*}
  \ENDFOR
  \FOR{{\bfseries all movies} $m$}
  \STATE Sample movie cluster assignments analogously.
  \ENDFOR
  \FOR{{\bfseries all} $(c,d)$ cluster partitions}
  \STATE Resample $T_{cd}$ using \eq{eq:scd-normal} and the statistics
  $n_{cd}, l_{cd}$.
  \ENDFOR
  \STATE Resample $\sigma^2$ and $\tau^2$ 
  via the Inverse Gamma distribution using \eq{eq:ig-eta} and \eq{eq:ig-gamma}.
  \ENDWHILE
\end{algorithmic}
\end{algorithm}
%
Note that once $n_{ud}$ and $l_{ud}$ are available for all users (or
all movies), it is cheap to perform additional sampling sweeps at
comparably low cost. It is therefore beneficial to iterate over
all users (or all movies) more than once, in particular in the initial
stages of the algorithm. 
Also note that the algorithm can be used on datasets that are being
streamed from disk, provided that an index and an inverted index of
$M$ can be stored: we need to be able to traverse the data
when ordered by users and when ordered by movies. It is thus
compatible with solid state disks.

\subsection{Additive Combinations of Stencils}
\label{sec:many}


If there was no penalty on the number of clusters it would be possible to
approximate any matrix by a trivial model using as many
clusters as we have rows and columns.
\begin{lemma}
  Any matrix $M \in \RR^{m \times n}$ has nonvanishing support in
  \eq{eq:fullmodel} regardless of $\sigma^2$. 
\end{lemma}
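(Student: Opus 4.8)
The plan is to exhibit a single configuration of cluster assignments and template values at which the joint density \eq{eq:fullmodel} is strictly positive and continuous in $M$; since the marginal over $M$ is obtained by integrating out $T$ and summing over $c,d$, positivity at one point (together with continuity in $M$ and $T$) forces $p(M|\alpha,\beta,\sigma^2,\tau^2)>0$, so $M$ lies in the support for every choice of the variances.

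First I would take the maximally refined co-clustering: put every row in its own singleton cluster and every column in its own singleton cluster, so that the stencil satisfies $S(T,c,d)_{um}=T_{c_u d_m}=T_{um}$. Setting $T_{cd}:=M_{cd}$ on observed pairs (and any finite value elsewhere) makes the reconstruction exact: the residual $M_{um}-T_{c_u d_m}$ equals $0$ for every observed $(u,m)$.

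Next I would verify that each of the three factors of \eq{eq:fullmodel} is strictly positive at this configuration. The two $\mathrm{CRP}$ factors reduce, via the closed form $p(c|\alpha)=\alpha^{k_n}\sbr{\prod_{i}\Gamma(n_i)}\Gamma(\alpha)/\Gamma(\alpha+n)$, to $\alpha^{k_n}\Gamma(\alpha)/\Gamma(\alpha+n)>0$ once every $n_i=1$ (so $\Gamma(n_i)=1$), and likewise for $d$; these are positive for any $\alpha,\beta>0$. The template prior $\prod_{c,d}(2\pi\tau^2)^{-1/2}\exp(-T_{cd}^2/(2\tau^2))$ is a finite product of strictly positive Gaussian densities evaluated at finite arguments, hence positive for any $\tau^2>0$. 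Finally the likelihood $\prod_{(u,m)}(2\pi\sigma^2)^{-1/2}\exp(-(M_{um}-T_{c_u d_m})^2/(2\sigma^2))$ collapses to $(2\pi\sigma^2)^{-E/2}$ because all residuals vanish, and this is a finite, strictly positive number for \emph{every} $\sigma^2\in(0,\infty)$ --- which is precisely the ``regardless of $\sigma^2$'' clause, since $\sigma^2$ enters the likelihood only through the now-vanished residuals. The product of three strictly positive factors is strictly positive, which is what we need.

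I do not expect a genuine obstacle; the only point needing a word of care is missing entries, handled by noting that the residual product ranges only over observed pairs, so the singleton co-clustering still reconstructs $M$ exactly on its support and the argument is unchanged. In the write-up it is worth remarking that this degenerate solution is exactly the ``trivial model'' alluded to just before the lemma, and that the surrounding discussion is meant to stress that it is the implicit penalty on the number of clusters in the $\mathrm{CRP}$ priors (controlled by $\alpha,\beta$) that prevents such a solution from dominating the posterior.
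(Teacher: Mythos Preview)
Your proposal is correct and takes essentially the same approach as the paper: both arguments pick the maximally refined partitioning (every row and every column in its own singleton cluster) so that each entry $M_{um}$ receives its own mean $T_{c_u d_m}$, and both note that the CRP assigns positive mass to this partitioning. Your write-up is considerably more detailed than the paper's two-sentence proof---in particular you explicitly verify positivity of each factor, handle the ``regardless of $\sigma^2$'' clause by making the residuals vanish, and treat missing entries---but the underlying idea is identical.
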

\begin{proof}
  Since any partitionings of sets of size $m, n$ respectively have
  nonzero support, it follows that partitioning all rows and all
  columns into separate bins is possible. Hence, we can
  assign a different mean $\mu_{cd}$ to any entry $M_{um}$. 
\end{proof}
Obviously, the CRP prior on $c$ and $d$ makes this highly unlikely. On
the other hand, we want to retain the ability to fit a richer set of
matrices than what can be effectively covered by piecewise constant
block matrices.  We take linear combinations of
matrices, as introduced in Section~\ref{sec:matapp}. 

As before, we enumerate the stencils by $S(T^l, c^l,
d^l)$. Correspondingly we now need to sample from a set of $S(T^l,
c^l, d^l)$ and $\tau^2$ \emph{per} matrix. However, we keep the
additive noise term $\Ncal(0, \sigma^2)$ unchanged. This is the model
of Figure \ref{fig:multi_stencil_model}.
The additivity of
Gaussians renders makes inference easy:
\begin{align}
  M_{um} \sim \Ncal\rbr{\sum_l S(T^l, c^l, d^l), \sigma^2}.
\end{align}
Note, though, that estimating $S$ jointly \emph{for all} indices $l$
is not tractable since various clusterings $(c^l, d^l)$ overlap and
intersect with each other, hence the joint normal distribution over all
variables would be expensive to factorize.
\begin{algorithm}[tb]
  \caption{bACCAMS \label{alg:baccams}}
\begin{algorithmic}
  \STATE {\bfseries initialize} residuals $\rho := M$ and $T^l = 0$
  \WHILE{sampler not converged}
  \FOR{{\bfseries all stencils} $l$}
  \STATE Compute partial residuals $\rho \leftarrow \rho - S(T^l, c^l,d^l)$
  \STATE Sample over $S(T^l, c^l, d^l)$ using $\rho$ instead of $M$
  \STATE Update residuals with $\rho \leftarrow \rho + S(T^l, c^l, d^l)$
  \ENDFOR
  \ENDWHILE
\end{algorithmic}
\end{algorithm}
Instead, we sample over one stencil at a time, as shown in Algorithm \ref{alg:baccams}.
This algorithm only requires repeated passes through the
dataset. Moreover, it can be modified into a backfitting procedure by
fitting one matrix at a time and then fixing the outcome. Capacity
control can be enforced by modifying $\alpha$ and $\beta$ such that
the probability of a new cluster decreases for larger $l$, i.e.\ by
decreasing $\alpha$ and $\beta$.  As a result following the analysis
in the single stencil case, each sampling pass costs
$O(s\cdot(k_n \cdot k_m \cdot (n+m) + E))$ operations where $s$ is the number
of stencils. It is linear in the number of ratings, in the number of
partitions and in the number of stencils.

\vspace{3mm}  

\section{Experiments}
\label{sec:experiments}

We evaluate our method based on its ability to perform matrix
completion, matrix approximation and to give interpretable results.
Here we describe our experimental setup and results on real
world data, such as the Netflix ratings.

\subsection{Implementation}

We implemented both \method, the $k$-means-based algorithm, as well as
\bmethod, the Bayesian model. Unless specified otherwise, we run the RowClustering
of Algorithm~\ref{alg:rowcluster} for up to $T=50$ iterations. Our
system can also iterate over the stencils multiple times, such that earlier
stencils can be re-learned after we have learned later ones.  In practice, we
observe this only yields small gains in accuracy, hence we generally do
not use it.

We implemented \bmethod using Gibbs sampling (Section~\ref{sec:many}) and used
the $k$-means algorithm \method for
the initialization of each stencil. Following standard practice, we
bound the range of $\sigma$ by $\sigma_{\rm max}$ from above. This
rejection sampler avoids pathological cases. For the
sake of simplicity, we set $k = k_n = k_m$ to be the maximum number of clusters
that can be generated in each stencil.  When inferring the cluster
assignments for a given stencil, we run three iterations of the
sampler before proceeding to the next stencil.  As common in MCMC
algorithms, we use a burn-in period of at least 30 iterations (each
with three sub-iterations of sampling cluster assignments) and 
then average the predictions over many draws.
Code for both \method and \bmethod is available at
\url{http://alexbeutel.com/accams}.

\subsection{Experimental Setup} 
\label{sub:setup}

\myparagraph{Netflix}
We run our algorithms on data from a variety of domains.  Our primary testing
dataset is the ratings dataset from the Netflix contest.  The
dataset contains 100M ratings from 480k users and 17k movies.
Following standard practice for testing recommendation accuracy, we
average over three different random 90:10 splits for training and testing. 


\myparagraph{CMU Face Images}
To test how well we can approximate arbitrary matrices, we use image
data from the CMU Face Images
dataset\footnote{http://goo.gl/FsoX5p}.
It contains black and white images of 20 different people, each in 32 different
positions, for a total of 640 images.  Each
image has $128 \times 120$ pixel resolution; we flatten this into a
matrix of $640 \times 15360$, i.e.\ an image by pixel matrix.

\myparagraph{AS Peering Graph}
To assess our model's ability to deal with graph data we consider the AS 
graph\footnote{http://topology.eecs.umich.edu/data.html}.  
It contains information on the peering information
of 13,580 nodes. It thus creates a binary matrix of size $13,580 \times
13,580$ with 37k edges. Since our algorithm is not designed to
learn binary matrices, we treat the entries $\cbr{0, 1}$ as real
valued numbers.

\myparagraph{Parameters}
For all experiments, we set the hyperparameters in \bmethod to
$\alpha=\beta=10$, $\eta_\alpha = 2$, $\eta_\beta = 0.3$,
$\gamma_\alpha = 5$, and $\gamma_\alpha=0.3$.  Depending on the task,
we compare \method against SVD++ using the GraphChi \cite{KyrBleGue12}
implementation, SVD from Matlab for full matrices, and 
previously reported state-of-the-art results.

\myparagraph{Model complexity} 
Since our model is structurally quite different from 
factorization models, we compare them based on 
the number of bits in the model and prediction accuracy.  For factorization
models, we consider each factor to be a 32 bit {\tt float}.  Hence the
complexity of a rank $R$ SVD++ model of $n$ users and $m$ movies 
is $32\cdot R(n+m)$ bits. 

For \method with $s$ stencils and $k \times k$
co-clusters in each stencil, the cluster assignment for a given row or column
is $\log_2k$ bits and each value in the stencil is a {\tt float}.  As
such, the complexity of a model is $s ((n+m) \log_2 k + 32\cdot k^2)$
bits.

In calculating the parameter space size for LLORMA, we make the very            
conservative estimate that each row and column is on average part of two        
factorizations, even though the model contains more than 30 factorizations that 
each row and column could be part of.                                           

\subsection{Matrix Completion} 
\label{sub:Recommendation}

Since the primary motivation of our model is collaborative filtering
we begin by discussing results on the classic Netflix
problem; accuracy is measured in RMSE. To avoid divergence we set
$\sigma_{\rm max} = 1$. We then vary both the number of 
clusters $k$ and the number of stencils $s$.  

A summary of recent results as well as results using our method can be
found in Table \ref{tab:netflix}.  Using GraphChi we run SVD++
on our data. 
We use the reported values from LLORMA
\cite{lee2013local} and DFC \cite{mackey2011divide},
which were obtained using the same protocol as reported here.

As can be seen in Table \ref{tab:netflix}, \bmethod achieves the {\it best}
published result.
We achieve this while using 
a very different model that is significantly simpler both
conceptually and in terms of parameter space size. We also did not use
any of the temporal and contextual variants that many other models use
to incorporate prior knowledge. 

As shown in Figure
\ref{fig:netflix_test_crownjewel}, we observe that per bit our model achieves
much better accuracy at a fraction of the model size. 
In Figure \ref{fig:matrix_approx}(a) we compare different
configurations of our algorithm.  As can be seen, classic
co-clustering quickly overfits the training data and provides a less
fine-grained ability to improve prediction accuracy than \method.  
Since \method has no regularization, it too overfits the training data.
By using a Bayesian model with \bmethod, we do not
overfit the training data and thus can use more stencils for prediction,
greatly improving the prediction accuracy.

\begin{table}[tb]
	\centering
	\begin{tabular}{l|l|l|l}
		Method & Parameters & Size & Test RMSE \\ \hline
		SVD++ \cite{KyrBleGue12} & $R=25$ & 49.8MB & 0.8631 \\ 
		DFC-NYS \cite{mackey2011divide} & \multicolumn{2}{|c|}{Not reported} & 0.8486 \\ 
		DFC-PROJ  \cite{mackey2011divide} & \multicolumn{2}{|c|}{Not reported} & 0.8411 \\ 
		LLORMA \cite{lee2013local} & $R=1$ & 3.98MB & 0.9295 \\ 
		LLORMA \cite{lee2013local}  & $R=5$, $a > 30$ &19.9MB & 0.8604 \\ 
		LLORMA \cite{lee2013local}  & $R=10$, $ a> 30$ & 39.8MB & 0.8444 \\ 
		LLORMA \cite{lee2013local}  & $R=20$, $a > 30$& 79.7MB & 0.8337 \\ 
		\hline
		\method & $k=10$, $s=13$ &2.69MB & 0.8780 \\ 
		\method & $k=100$, $s=5$ &2.27MB & 0.8759 \\ 
		\bmethod & $k=10$, $s=50$ &10.4MB & 0.8403  \\ 
		\bmethod & $k=10$, $s=70$ &14.5MB& 0.8363  \\ 
		{\bfseries \bmethod} & $k=10$, $s=125$ & {\bfseries 25.9MB} & {\bfseries 0.8331  }
	\end{tabular}
	\caption{\bmethod achieves an accuracy for matrix completion on Netflix better than or
	on-par with the best published results, while having a parameter space a
	fraction of the size of other methods.  $a$ denotes the number of anchor
	points for LLORMA and sizes listed are the parameter space size.}
	\label{tab:netflix}
\end{table}

\begin{figure}[tb]
	\centering
 \begin{tabular}{@{}cc@{}}
  \includegraphics[width=0.37\textwidth]{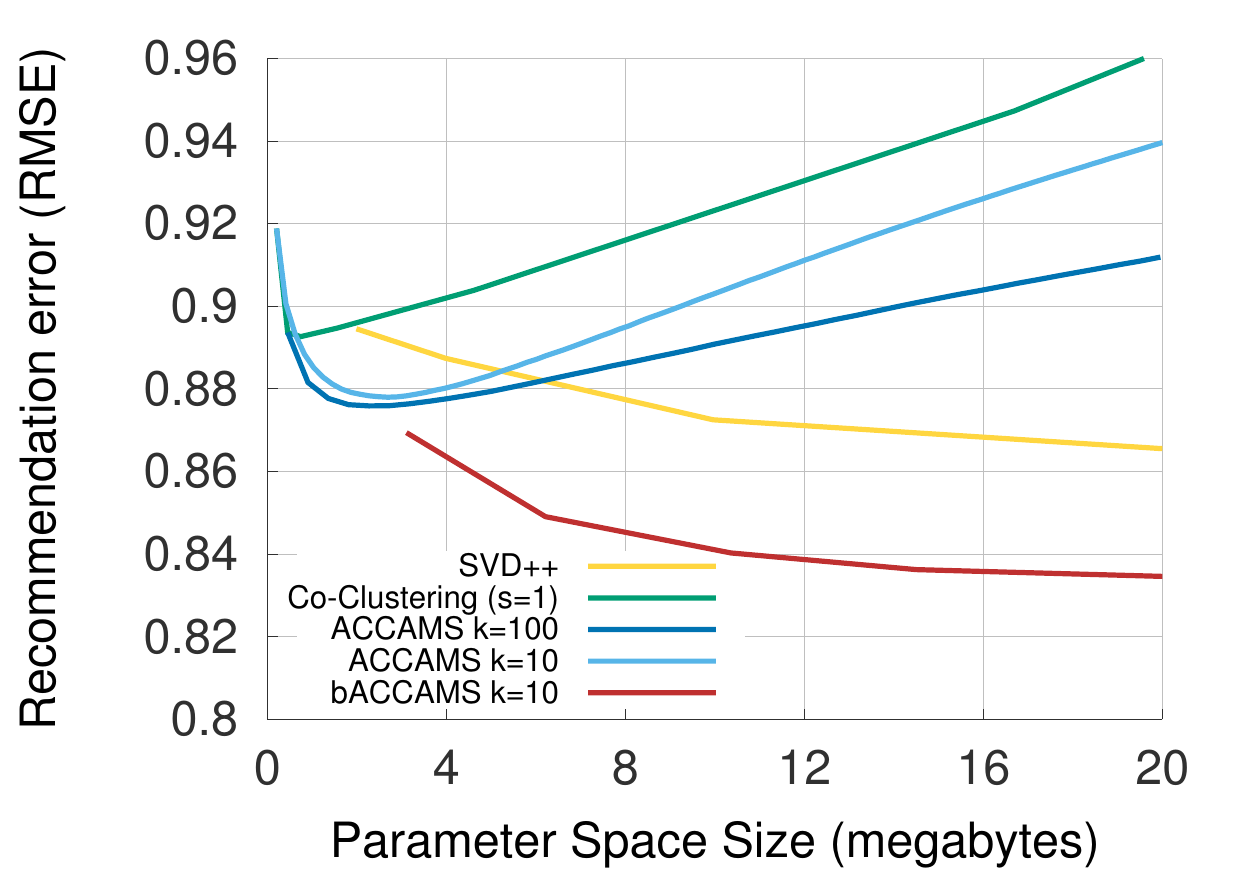} &
  \includegraphics[width=0.37\textwidth]{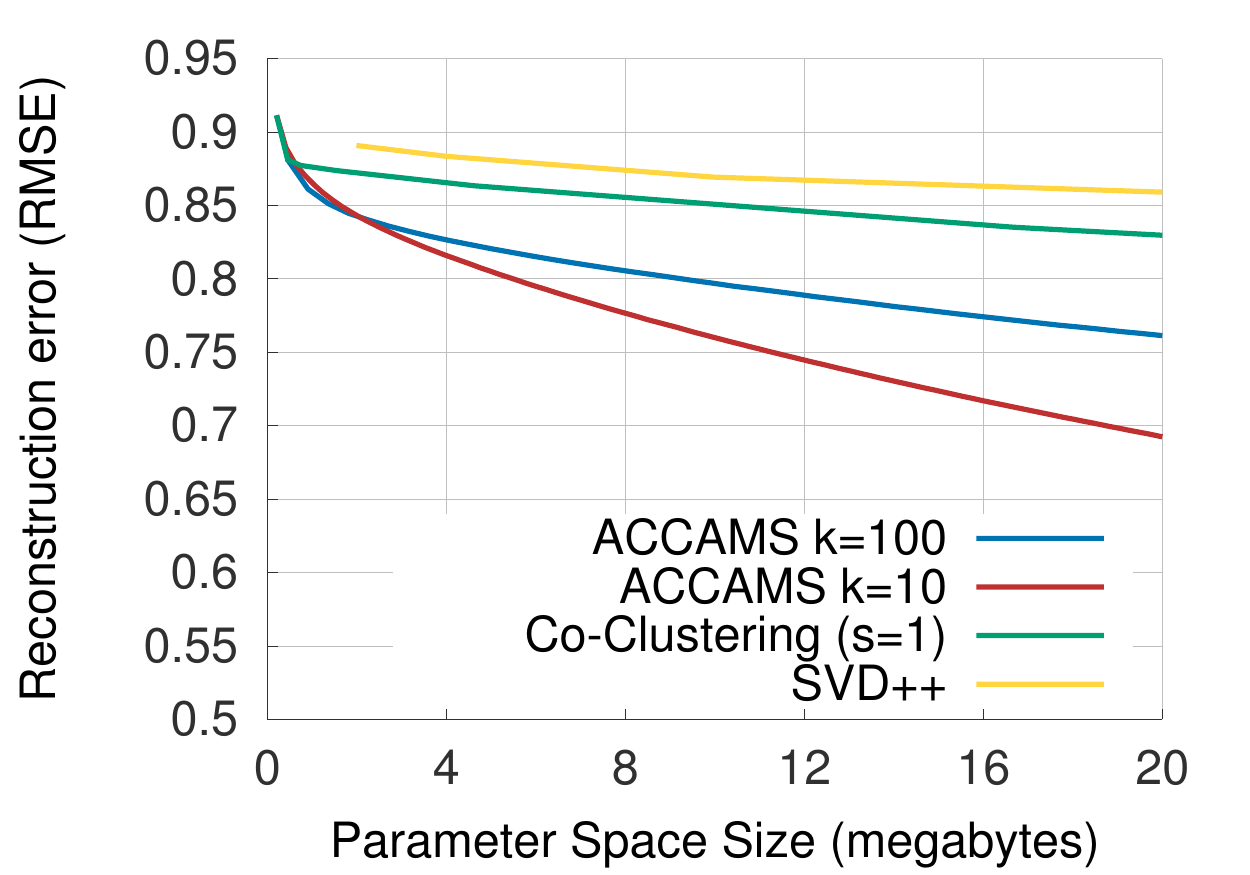} \\
  (a) Netflix Prediction & (b) Netflix Training Error \\
  \includegraphics[width=0.37\textwidth]{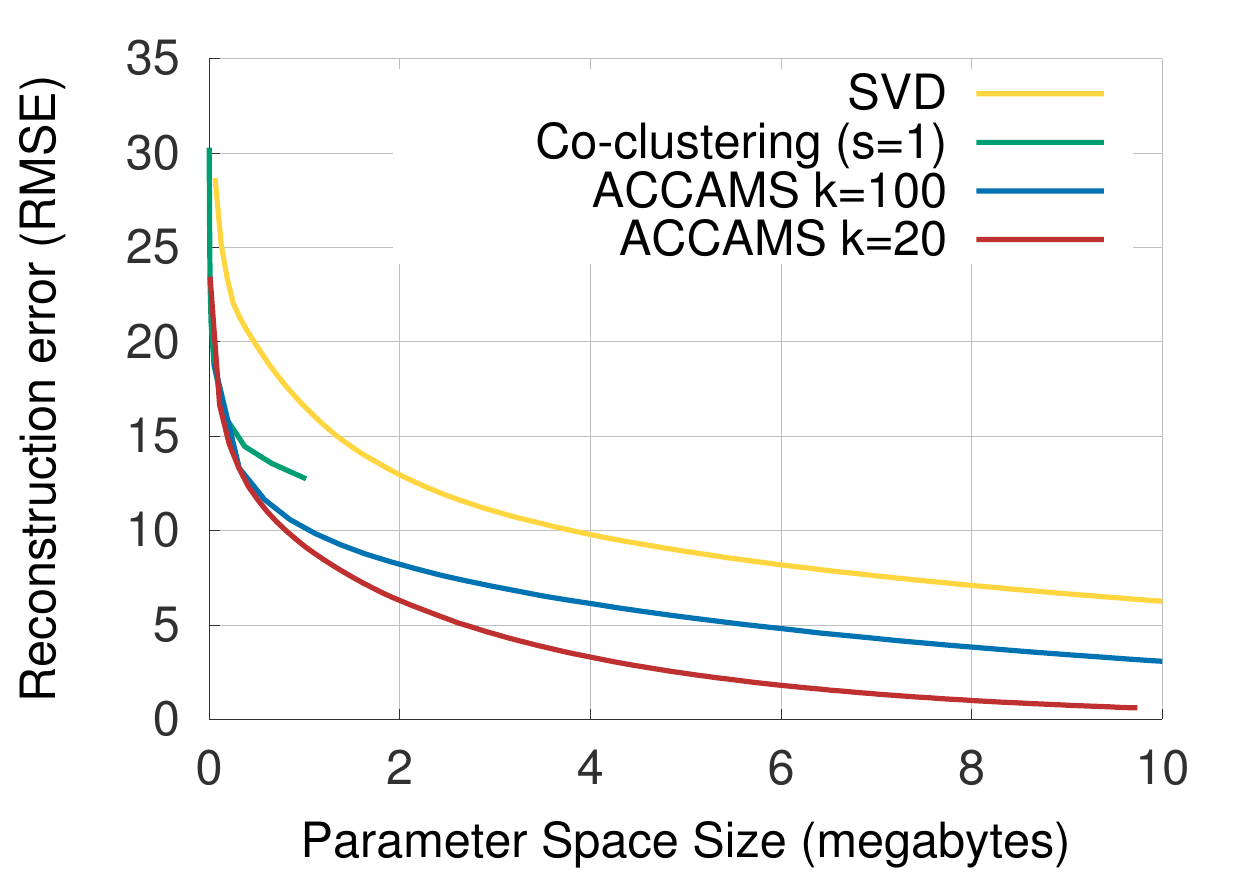} &
  \includegraphics[width=0.37\textwidth]{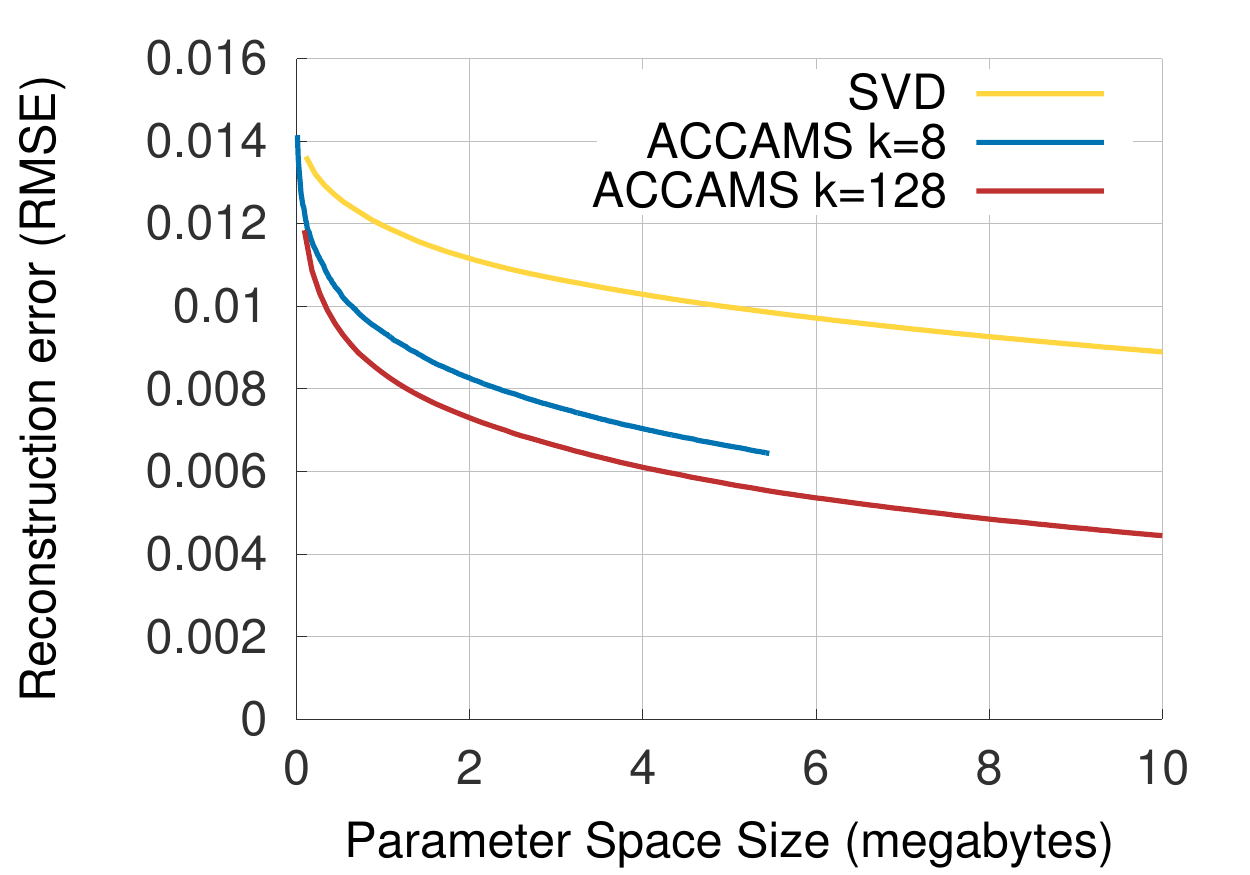}\\
  (c) Faces & (d) AS Graph 
 \end{tabular}
  \caption{
	On images, ratings, and binary graphs,
    \method approximates the matrix more efficiently than SVD, SVD++,
    or classic co-clustering.}
\label{fig:matrix_approx}
\end{figure}

\subsection{Matrix Approximation} 
\label{sub:Matrix Approximation}

In addition to matrix completion, it is valuable to be able to approximate
matrices well, especially for dimensionality reduction tasks.
To test the ability of \method to model matrix data we analyze both how well
our model fits the training data from the Netflix tests above as well as on
image data from the CMU Faces dataset and a binary matrix from the AS peering
graph. 
(Note, for Netflix we now use the training data from one split of the dataset.) 
For each of these of datasets we compare to the SVD (or SVD++ to handle
missing values).  We also use our
algorithm to perform classic co-clustering by setting $s=1$ and varying $k$.

As can be seen in Figure \ref{fig:matrix_approx}(b-d), \method models the matrices
from all three domains much more compactly than SVD (or SVD++ in the case of
the Netflix matrix, which contains missing values).  In particular, we observe
on the CMU Faces matrix that \method uses in some cases under $\frac{1}{4}$ of
the bits as SVD for the same quality matrix approximation.  Additionally, we
observe that using a linear combination of stencils is more efficient to
approximate the matrices than performing classic co-clustering where we have
just one stencil.  
Ultimately, although the method was not designed specifically for image or
network data, we observe that our method is effective for succinctly modeling
the data.

\subsection{Interpretability} 
\label{sub:Interpretability}

\begin{figure}[tb]
\centering
\includegraphics[width=\textwidth]{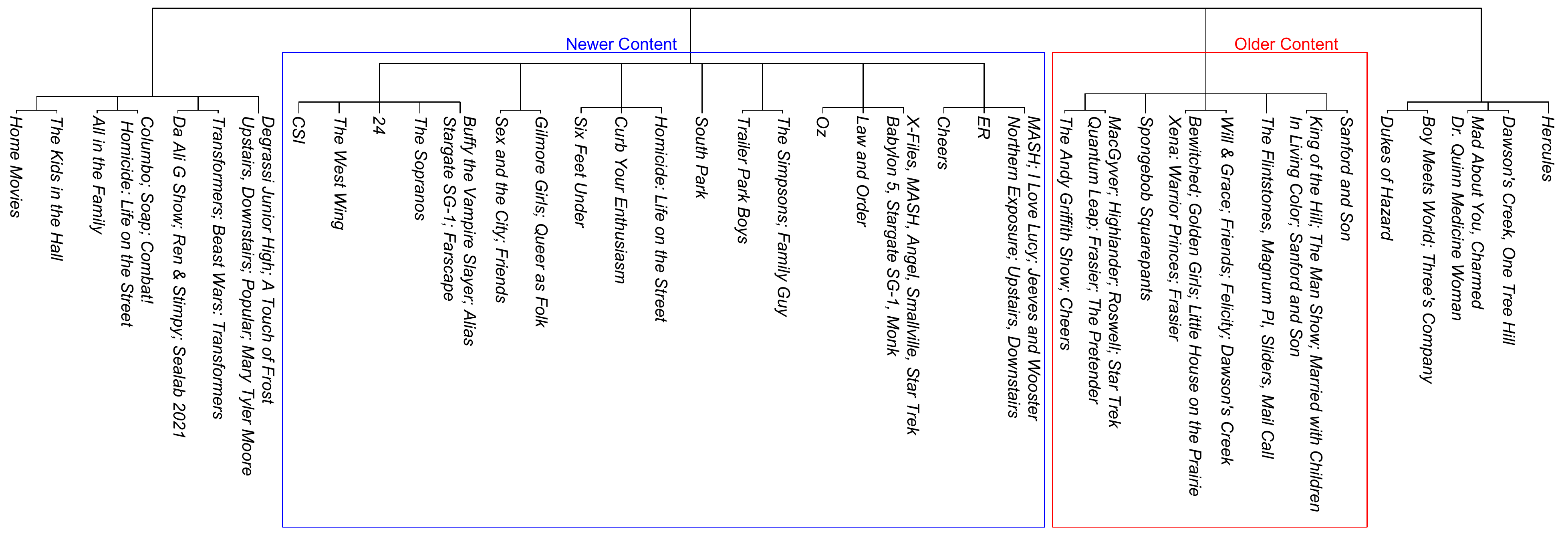} 
\vspace{0mm}
\caption{Hierarchy of TV Shows on Netflix based on the first three stencils generated by \method.}
\label{fig:hierarchy}
\vspace{2mm}
  \tiny
	\centering
	\begin{tabular}{l|l|l|l}
	{\bf 2001: A Space Odyssey}	& {\bf Sex and the City: Season 1}				& {\bf Seinfeld: Seasons 1 \& 2}  	  	& 	    {\bf Mean Girls}   			  			\\ \hline
	Taxi Driver				   & Sex and the City: Season 2					& Seinfeld: Season 3				    &         Clueless						        \\
	Chinatown				   	& Sex and the City: Season 3					& Seinfeld: Season 4				    &         13 Going on 30				        \\
	Citizen Kane			   	& Sex and the City: Season 4					& Curb Your Enthusiasm: Season 1	    &         Best in Show				        \\
	Dr. Strangelove			   & Sex and the City: Season 5					& Curb Your Enthusiasm: Season 2	    &         Particles of Truth				        \\
	A Clockwork Orange		   & Sex and the City: Season 6.1					& Curb Your Enthusiasm: Season 3	    &     Charlie's Angels: Full Throttle               \\
	THX 1138: Special Edition  	& Sex and the City: Season 6.2					& Arrested Development: Season 1	    &       Amelie						          \\
	Apocalypse Now Redux	   	& Hercules:  Season 3							& Newsradio: Seasons 1 and 2		    &         Me Myself I					    \\
	The Graduate			   	& Will \& Grace: Season 1						& The Kids in the Hall: Season 1	    &         Bring it On					        \\
	Blade Runner			   	& Beverly Hills 90210: Pilot					& The Simpsons: Treehouse of Horror     &         Chaos						         \\
	The Deer Hunter			   & The O.C.: Season 1							& Spin City: Michael J. Fox			    &         Kissing Jessica Stein			         \\
	Deliverance				   & Divine Madness
                                                                                                        &
                                                                                                          Curb
                                                                                                          Your
                                                                                                          Enthusiasm:
                                                                                                          Season
                                                                                                          4
                                                                                                                                                        &         Nine Innings from Ground Zero	          \\ 
          \multicolumn{4}{c}{~} \\
		{\bf Star Wars: Episode V} 						& {\bf The Silence of the Lambs}  				& {\bf Scooby-Doo Where Are You?} 		  	 & {\bf Law \& Order: Season 1}				   			\\ \hline
		Star Wars: Episode IV							& The Sixth Sense								& The Flintstones: Season 2				     & Law \& Order: Season 3				                \\
		Star Wars: Episode VI							& Alien: Collector's Edition					& Classic Cartoon Favorites: Goofy           & Law \& Order: SVU (2)	          \\
		Battlestar Galactica: Season 1					& The Exorcist									& Transformers: Season 1 (1984)              & Law \& Order: Criminal Intent (3)        \\
		Raiders of the Lost Ark							& Schindler's List								& Tom and Jerry: Whiskers Away!              & Law \& Order: Season 2				         \\
		Star Wars: Clone Wars: Vol. 1					& The Godfather									& Boy Meets World: Season 1                  & MASH: Season 8						       \\
		Gladiator: Extended Edition						& Seven											& The Flintstones: Season 3                  & ER: Season 1							     \\
		Star Wars Trilogy: Bonus Material				& Colors										& Scooby-Doo's Greatest Mysteries            & MASH: Season 7						      \\
		LOTR: The Fellowship of the Ring				& The Godfather, Part II 						& Care Bears: Kingdom of Caring              & Rikki-Tikki-Tavi								\\
		LOTR: The Two Towers				& GoodFellas: Special Edition								& Aloha Scooby-Doo!							 & The X-Files: Season 6					   			\\
		Indiana Jones Trilogy: Bonus Material			& Platoon										& Scooby-Doo: Legend of the Vampire			 & The X-Files: Season 7					                    \\
		LOTR: The Return of the King					& Full Metal Jacket								& Rugrats: Decade in Diapers 	             & ER: Season 3							         
	\end{tabular}
	\caption{For a given movie or TV show on Netflix, we can use the cluster assignments to find related content.}
	\label{tab:netflix_similar}
\end{figure}

In any model the structure of factors makes assumptions about the
form of user preferences and decision making.  The fact that our model
achieves high-quality matrix completion with a smaller parameter space suggests
that our modeling assumptions better match how people make decisions.
One side effect of our model being both compact and conceptually
simple is that we can understand our learned parameters.

To test the model's interpretability we use \method to model the Netflix data
with $s=20$ stencils and $k^2=100$ clusters (a model of similar size to a
rank-3 matrix factorization).   
Here we look at two ways to interpret the results.

First we view the cluster assignments in stencils as inducing a hierarchy on
the movies.  That is, movies are split in the first level based on their
cluster assignments in the first stencil.  At the second level, we split movies
based on their cluster assignments in the second stencil, etc.
In Figure \ref{fig:hierarchy} we observe the hierarchy of TV shows induced by
the first three stencils learned by \method (we only include shows
where there is more than one season of that show in the leaf and we
pruned small partitions due to space restrictions).

As can be seen in the hierarchy, there are branches which clearly cluster together
shows more focused on male audiences, female audiences, or children.
However, beyond a first brush at the leaf nodes, we can notice some larger
structural differences.  For example, looking at the two large branches coming
from the root, we observe that the left branch generally contains more recent
TV shows from the late 1990s to the present, while the right branch generally
contains older shows ranging from the 1960s to the mid 1990s.  This can be most
starkly noticed by ``Friends,'' which shows up in both branches; Seasons 1 to
4 of ``Friends'' from 1994-1997 fall in the older branch, while Seasons 5 to 9
from 1998-2002 fall in the newer branch.
Of course the algorithm does not know the dates the shows were released, but
our model learns these general concepts just based on the ratings.
From this it is clear the stencils can be useful for breaking down content in a
meaningful structured way, something that is not possible under classic
factorization approaches.

While the hierarchy demonstrates that our stencils are learning meaningful
latent factors, it may be difficult to always understand individual clusters.
Rather, to use knowledge from {\em all} of the stencils, we can look to  the use case of
``Users who watched $X$ also liked $Y$,'' and ask given a movie or TV show to
search, can we find other similar items?  We do this by comparing the set of
cluster assignments from the given movie to the set of cluster assignments of
other items. We measure similarity between two movies using the
Hamming distance between cluster assignments.

As can be seen in Table \ref{tab:netflix_similar}, we find the combination of
clusters for different movies and TV shows can be used to easily find similar
content.  While we see some obvious cases where the method succeeds, e.g. ``Sex
and the City'' returns six more seasons of ``Sex and the City,'' we also notice
the method takes into account more subtle similarities of movies beyond genre.
For example, while the first season of ``Seinfeld'' returns the subsequent seasons
of ``Seinfeld,'' it is followed by three seasons of ``Curb Your Enthusiasm,''
another comedy show by the same writer Larry David.  Similarly, searching for Stanley
Kubrick's ``2001: A Space Odyssey'' returns other Stanley Kubrick movies, as
well as other critically acclaimed films from that era, particularly
thematically similar science fiction movies.  
Searching for
``Scooby-Doo'' returns topically similar children's shows, specifically from
the mid to late 1900's.  From this we get a sense that \method does
not just find similarity in genre but also more subtle similarities.

\begin{figure}[tb]
  \centering
\begin{tabular}{ccc}
  Original & Stencil 1 & Stencil 2 \\
  \includegraphics[width=0.22\columnwidth]{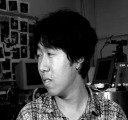} &
  \includegraphics[width=0.22\columnwidth]{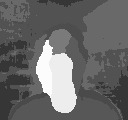}  &
  \includegraphics[width=0.22\columnwidth]{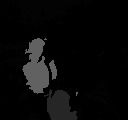}  \\
  \includegraphics[width=0.22\columnwidth]{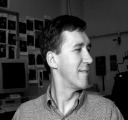}  &
  \includegraphics[width=0.22\columnwidth]{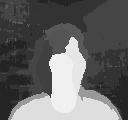} &
  \includegraphics[width=0.22\columnwidth]{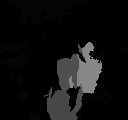}  
\end{tabular}
\caption{Examples of original images and the first two stencils. 
  The decomposition is very similar to that of eigenfaces \cite{TurPen91},
  albeit much more concise in its nature.}
\label{fig:faces}
\end{figure}


\subsection{Properties of ACCAMS} 
\label{sub:properties}

\begin{figure}[tb]
\centering
\includegraphics[width=0.40\columnwidth]{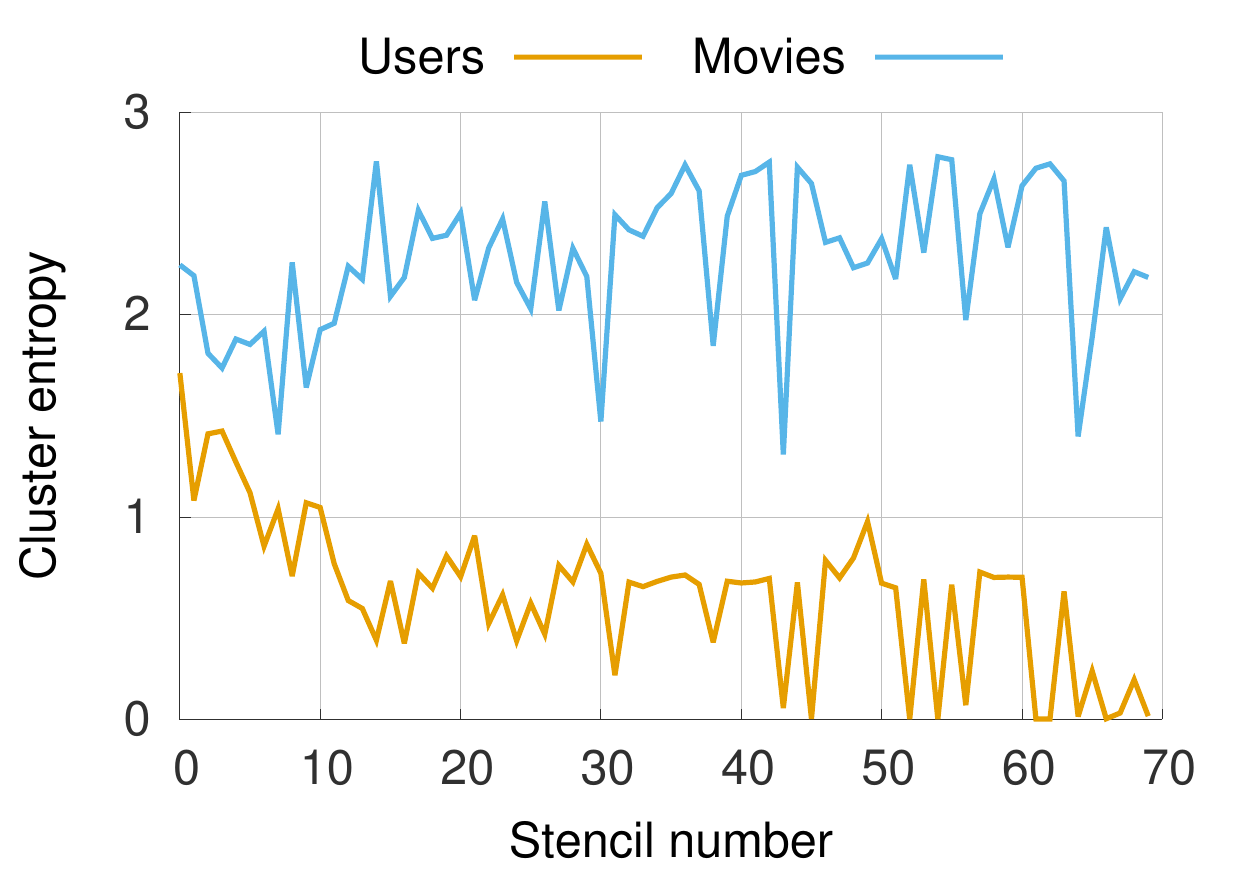}
\includegraphics[width=0.40\columnwidth]{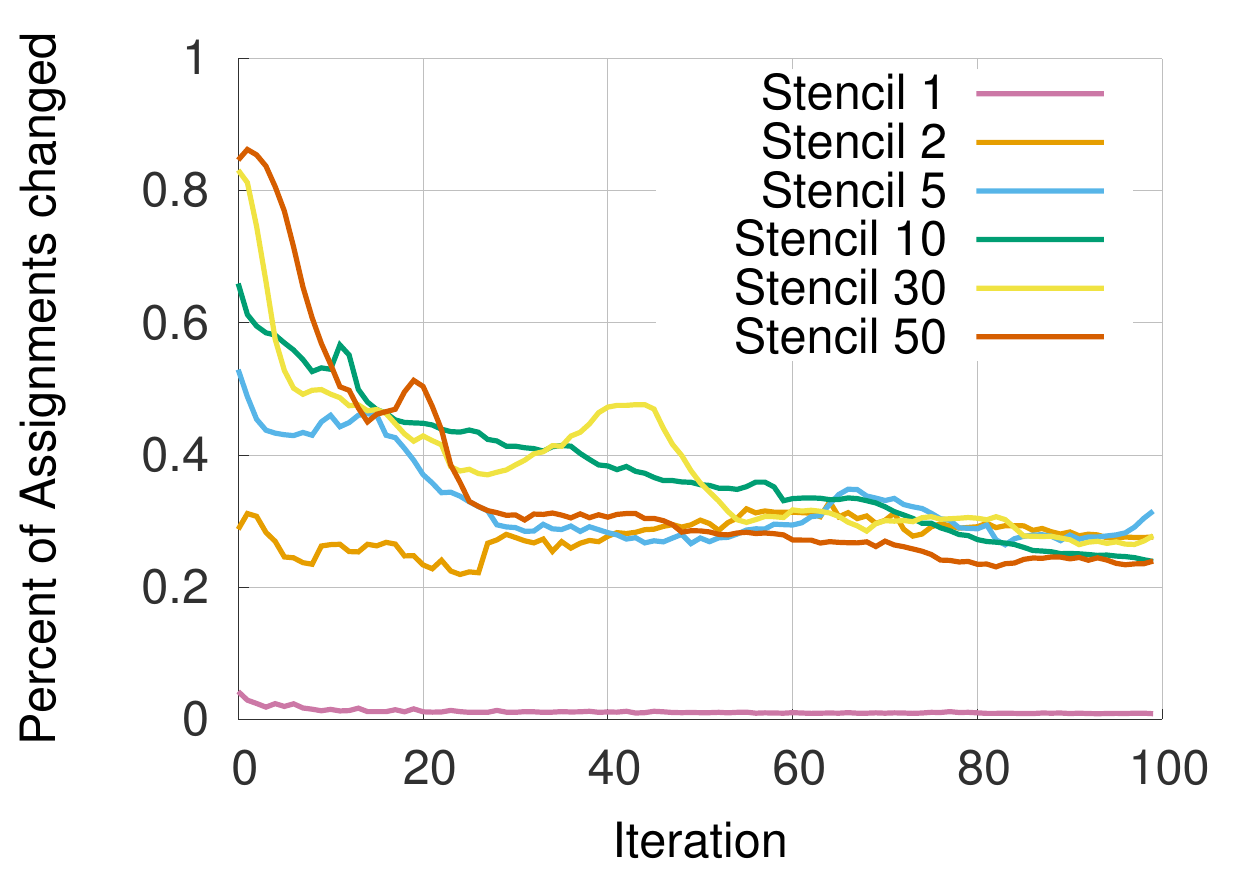} 
\caption{Left: Entropy in cluster assignments for users and
  movies. Right: Stability of the assignments in the sampler.}
\label{fig:analysis}
\end{figure}

Aside from \method's success across matrix completion and approximation, it is
valuable to understand how our method is working, particularly because of how
different it is from previous models.
First, because \method uses backfitting, we expect that the first
stencil captures the largest features, the second captures secondary ones,
etc.  This idea is backed up by our theoretical results in Section
\ref{sec:entropy}, and we observe that this is working experimentally by the
drop off in RMSE for our matrix approximation results in Figure
\ref{fig:matrix_approx}.
We can visually observe this in the image approximation of the
CMU Faces.  As can be seen in Figure \ref{fig:faces}, the first stencil
captures general structures of the room and heads, and the second starts to
fill in more fine grained details of the face.  

The Bayesian model, \bmethod, backfits in the first iteration of the
sampler but ultimately resamples each stencil many times thus
loosening these properties.  In Figure \ref{fig:analysis}, we observe
how the distribution of users and movies across clusters changes over
iterations and number of stencils, based on our run of \bmethod with
$s=70$ stencils and a maximum of $k=10$ clusters per stencil.  As we
see in the plot of entropy, movies, across all 70 stencils, are well distributed
across the 10 possible clusters. 
Users, however, are well distributed in the early stencils but then are only
spread across a few clusters in later stencils.
In addition, we
notice that while the earlier clusters are stable,
later stencils are much less stable with a high percentage of cluster
assignments changing.  Both of these properties follow
from the fact that most users rate very few movies.  For
most users only a few clusters are necessary to capture their
observed preferences. Movies, however, typically have more ratings and 
more latent information to infer. Thus through all 70
stencils we learn useful clusterings, and our
prediction accuracy improves through $s=125$ stencils.

\section{Discussion}
\label{sec:discussion}

Here we formulated a model of additive co-clustering.  
We presented both a $k$-means style algorithm, \method, as well as a generative
Bayesian non-parametric model with a collapsed Gibbs sampler,
\bmethod; we obtained theoretical guarantees for matrix approximation
through additive co-clustering; and we showed that our method is concise and
accurate on a diverse range of datasets, including achieving the best published
accuracy on Netflix.

Given the novelty and initial success of the
method, we believe that domain-specific variants of \method, such as for
community detection and topic modeling, can and will lead to new models and
improved results.  In addition, given the modularity of our framework, it is
easy to incorporate side information, such as explicit genre and actor data, in
modeling rating data that should lead to improved accuracy and
interpretability.

{\small
\myparagraph{Acknowledgements}
We would like to thank Christos Faloutsos for his valuable feedback 
throughout the preparation of this paper.
This research was supported by funds from Google,
a Facebook Fellowship, a
National Science Foundation Graduate Research Fellowship (Grant No.
\textsc{DGE-1252522}), and
the National Science Foundation
under Grant No. CNS-1314632 and IIS-1408924. 
Any opinions, findings, and conclusions or recommendations expressed in this
material are those of the author(s) and do not necessarily reflect the views
of the National Science Foundation, or other funding parties. 
}

\clearpage
\renewcommand{\refname}{\normalfont\selectfont\normalsize\bfseries{References}}
\bibliographystyle{plain}


\end{document}